\def\eqref#1{equation~\ref{#1}}
\def\1{\bm{1}}
\def\vzero{{\bm{0}}}
\def\vone{{\bm{1}}}
\def\vp{{\bm{p}}}
\def\vq{{\bm{q}}}
\def\vx{{\bm{x}}}
\def\vy{{\bm{y}}}
\def\vz{{\bm{z}}}
\def\mA{{\bm{A}}}
\def\mB{{\bm{B}}}
\def\mD{{\bm{D}}}
\def\mI{{\bm{I}}}
\def\mT{{\bm{T}}}
\DeclareMathAlphabet{\mathsfit}{\encodingdefault}{\sfdefault}{m}{sl}
\SetMathAlphabet{\mathsfit}{bold}{\encodingdefault}{\sfdefault}{bx}{n}
\newcommand{\normltwo}{L^2}
\newtheorem{proposition}{Proposition}
\newcommand{\sect}[1]{Sec.~\ref{#1}}
\newcommand{\fig}[1]{Fig.~\ref{#1}}
\newcommand{\repo}{\url{github.com/facebookresearch/alebo}}
\title{Re-Examining Linear Embeddings for High-Dimensional Bayesian Optimization}
\author{%
  Benjamin Letham\\
  Facebook\\
  Menlo Park, CA \\
  \texttt{bletham@fb.com} \\
  \And
  Roberto Calandra\\
  Facebook AI Research\\
  Menlo Park, CA \\
  \texttt{rcalandra@fb.com} \\
  \And
  Akshara Rai\\
  Facebook AI Research\\
  Menlo Park, CA \\
  \texttt{akshararai@fb.com} \\
  \And
  Eytan Bakshy\\
  Facebook\\
  Menlo Park, CA \\
  \texttt{ebakshy@fb.com}
}
\begin{document}

\maketitle

\begin{abstract}
Bayesian optimization (BO) is a popular approach to optimize expensive-to-evaluate black-box functions. A significant challenge in BO is to scale to high-dimensional parameter spaces while retaining sample efficiency. A solution considered in existing literature is to embed the high-dimensional space in a lower-dimensional manifold, often via a random linear embedding. In this paper, we identify several crucial issues and misconceptions about the use of linear embeddings for BO. We study the properties of linear embeddings from the literature and show that some of the design choices in current approaches adversely impact their performance. We show empirically that properly addressing these issues significantly improves the efficacy of linear embeddings for BO on a range of problems, including learning a gait policy for robot locomotion.
\end{abstract}

\section{Introduction}
\label{sec:intro}
Bayesian optimization (BO) is a robust, sample-efficient technique for optimizing expensive-to-evaluate black-box functions~\citep{Mockus1989,Jones2001taxonomy}. 
BO has been successfully applied to diverse applications, ranging from automated machine learning~\citep{snoek2012practical,hutter2011smac} to robotics~\citep{Lizotte2007Automatic,Calandra2015a,Rai2018Bayesian}. 
One of the most active topics of research in BO is how to extend current methods to higher-dimensional spaces.
A common framework to tackle this problem is to consider a high-dimensional BO (HDBO) task as a standard BO problem in a low-dimensional embedding, where the embedding can be either linear (typically a random projection) or nonlinear (\textit{e.g.}, via a multi-layer neural network); see \sect{sec:related} for a full review. An advantage of this framework is that it explicitly decouples the problem of finding low-dimensional representations suitable for optimization from the actual optimization technique.

In this paper we study the use of linear embeddings for HDBO, and in particular we re-examine prior efforts to use random linear projections. Random projections are attractive because, by the Johnson-Lindenstrauss lemma, they can be approximately distance-preserving~\citep{johnson1984extensions} without requiring data to learn the embedding.
Random embeddings come with strong theoretical guarantees, but have shown mixed empirical performance for HDBO. Our goal here is not just to present a new HDBO method, but rather to improve understanding of important considerations for BO in an embedding.

The contributions of this paper are: 
\textbf{1)} We provide new results that identify why linear embeddings have performed poorly in HDBO. We show that existing approaches can produce representations that are not well-modeled by a Gaussian process (GP), or do not contain an optimum (\sect{sec:rembo}). 
\textbf{2)} We construct a representation with better properties for BO (\sect{sec:method}): modelability is improved with a Mahalanobis kernel tailored for linear embeddings and by adding polytope bounds to the embedding, and we show how to maintain a high probability that the embedding contains an optimum. 
\textbf{3)} We combine these improvements to form a new linear-embedding HDBO method, ALEBO, and show empirically that it outperforms a wide range of HDBO techniques, including on test functions up to $D$=1000, with black-box constraints, and for gait optimization of a multi-legged robot (Secs. \ref{sec:experiments} and \ref{sec:realworld}). These results show empirically that we have identified several important elements impacting the BO performance of linear embedding methods. Code to reproduce the results of this paper is available at \repo.

\section{Related Work}
\label{sec:related}
There are generally two approaches to extending BO into high dimensions. The first is to produce a low-dimensional embedding, do standard BO in this low-dimensional space, and then project up to the original space for function evaluations.
The foundational work on embeddings for BO is REMBO~\citep{wang2016rembo}, which uses a random projection matrix. 
\sect{sec:background} provides a thorough description of REMBO and several subsequent approaches based on random linear embeddings~\citep{qian16, nayebi19hesbo, binois18}. If derivatives of $f$ are available, the active subspace method can be used to recover a linear embedding~\citep{constantine14, eriksson18}, or approximate gradients can be used \citep{Djolonga13}. A linear embedding can also be learned end-to-end with the GP \citep{garnett2014linear}, however this requires estimating $D \times d$ parameters, which is infeasible in settings where the total number of evaluations is much less than $D$. BO can also be done in nonlinear embeddings through VAEs~\citep{gomez2018chemical, lu2018vae, moriconi2019hdbo}.  An attractive aspect of random embeddings is that they can be extremely sample-efficient, since the only model to be estimated is a low-dimensional GP.

The second approach to extend BO to high dimensions is to make use of surrogate models that better handle high dimensions, typically by imposing additional structure on the problem. Work along these lines include GPs with an additive kernel \citep{kandasamy2015high, wang2017batched,gardner2017discovering,wang2018batched,rolland18, mutny18}, cylindrical kernels \citep{oh18bock}, or deep neural network kernels \citep{antonova2017deep}. Random forests are used as the surrogate model in SMAC~\citep{hutter2011smac}.

Here, we focus on the embedding approach, and in particular the use of linear embeddings for HDBO. While REMBO can perform well in some HDBO tasks, subsequent papers have found it can perform poorly even on synthetic tasks with a true low-dimensional linear subspace \citep[\textit{e.g.,}][]{nayebi19hesbo}. In this paper, we analyze the properties of linear embeddings as they relate to BO, and show how to improve the representation of the function we seek to optimize.

\section{Problem Framework and REMBO}
\label{sec:background}
In this section, we define the problem framework and notation, and then describe REMBO, along with known challenges and follow-up work that has been proposed to address those issues.

\paragraph{Bayesian Optimization}
We consider the problem $\min_{\vx \in \mathcal{B}} f(\vx)$ where $f$ is a black-box function and $\mathcal{B}$ are box bounds. We assume gradients of $f$ are unavailable. The box bounds on $\vx$ specify the range of values that are reasonable or physically possible to evaluate.
For instance, \citep{gramacy16} used BO for an environmental remediation problem in which each $x_i$ represents a pumping rate of a particular pump, which has physical limitations.
The problem may also include nonlinear constraints $c_j(\vx) \leq 0$ where each $c_j$ is itself a black-box function.
BO is a form of sequential model-based optimization, where we fit a surrogate model for $f$ that is used to identify which parameters~$\vx$ should be evaluated next. The surrogate model is typically a GP, $f \sim \mathcal{GP}(m(\cdot), k(\cdot, \cdot))$, with mean function $m(\cdot)$ and a kernel $k(\cdot, \cdot)$. Under the GP prior, the posterior for the value of $f(\vx)$ at any point in the space is a normal distribution with closed-form mean and variance. Using that posterior, we construct an acquisition function $\alpha(\vx)$ that specifies the utility of evaluating $f$ at $\vx$, such as Expected Improvement (EI) \citep{jones98}. We find $\vx^* \in \arg\max_{\vx \in \mathcal{B}} \alpha(\vx)$, and in the next iteration evaluate $f(\vx^*)$.

GPs are useful for BO because they provide a well-calibrated posterior in closed form. With many kernels and acquisition functions, $\alpha(\vx)$ is differentiable and can be efficiently optimized. However, typical kernels like the ARD RBF kernel have significant limitations. GPs are known to predict poorly for dimension $D$ larger than 15--20 \citep{wang2016rembo,li16, nayebi19hesbo}, which prevents the use of standard BO in high dimensions. In HDBO, the objective $f : \mathbb{R}^D \rightarrow \mathbb{R}$ operates in a high-dimensional ($D$) space, which we call the \textit{ambient space}. When using linear embeddings for HDBO, we assume there exists a low-dimensional linear subspace that captures all of the variation of $f$. Specifically, let $f_d : \mathbb{R}^d \rightarrow \mathbb{R}$, $d \ll D$, and let $\mT \in \mathbb{R}^{d \times D}$ be a projection from $D$ down to $d$ dimensions.
The linear embedding assumption is that $f(\vx) = f_d(\mT \vx) \enspace \forall \vx \in \mathbb{R}^D$. $\mT$ is unknown, and we only have access to $f$, not $f_d$. We assume, without any loss of generality, that the box bounds are $\mathcal{B} = [-1, 1]^D$; the ambient space can always be scaled to these bounds.

\paragraph{Bayesian Optimization via Random Embeddings}

REMBO~\citep{wang2016rembo} specifies a $d_e$-dimensional embedding via a random projection matrix $\mA \in \mathbb{R}^{D \times d_e}$ with each element i.i.d. $\mathcal{N}(0, 1)$. BO is done in the embedding to identify a point $\vy \in \mathbb{R}^{d_e}$ to be evaluated, which is given objective value $f(\mA \vy)$. Without box bounds, REMBO comes with a strong guarantee: if $d_e \geq d$, then with probability 1 the embedding contains an optimum \citep[Thm. 2]{wang2016rembo}. Unfortunately, things become complicated when there are box bounds (or any other sort of bound) in the ambient space. One may select a point $\vy$ in the embedding to be evaluated and find that its projection to the ambient space, $\mA \vy$, falls outside $\mathcal{B}$. The embedding subspace is guaranteed to contain an optimum to the box-bounded problem \citep[Thm. 3]{wang2016rembo}, but that optimum is \textit{not} guaranteed to project up inside $\mathcal{B}$. When function evaluations are restricted to the box bounds, there is no guarantee that we can find an optimum in the embedding.

REMBO introduces three heuristics for handling box bounds. First, the embedding is given box bounds $[-\sqrt{d_e}, \sqrt{d_e}]^{d_e}$.
Second, if a point $\vy$ in the embedding projects up outside $\mathcal{B}$, then it is clipped to $\mathcal{B}$. Let $p_{\mathcal{B}} : \mathbb{R}^D \rightarrow \mathbb{R}^D$ be the $\normltwo$ projection that maps $\vx$ to its nearest point in $\mathcal{B}$. Then, $\vy$ is given value $f(p_{\mathcal{B}}(\mA \vy))$, which can always be evaluated. Note that clipping to $\mathcal{B}$ renders the projection of $\vy$ a nonlinear transformation whenever $\mA \vy \notin \mathcal{B}$. Third, the optimization is done with $k$=4 separate projections, to improve the chances of generating an embedding that contains an optimum inside $[-\sqrt{d_e}, \sqrt{d_e}]^{d_e}$. No data can be shared across the embeddings, which reduces sample efficiency.

\paragraph{Extensions of REMBO}
\citep{binois15} considers the issue of non-injectivity, where the $\normltwo$ projection causes many points in the embedding to map to the same vertex of $\mathcal{B}$. They introduce REMBO-$\phi k_\Psi$, which uses a warped kernel to reduce non-injectivity.
\citep{binois18} defines a projection matrix $\mB \in \mathbb{R}^{d \times D}$ that maps from the ambient space down to the embedding, and replaces the $\normltwo$ projection entirely with a projection $\gamma$ that maps $\vy$ to the closest point in $\mathcal{B}$ that satisfies $\mB \vx = \vy$. The $\gamma$ projection eliminates the need for heuristic box bounds on the embedding, while mapping to the same set of points as the $\normltwo$ projection. This method is called REMBO-$\gamma k_\Psi$.
\citep{binois2015thesis} studies the projection matrix and shows that BO performance can be improved for small $d$  by sampling each row of $\mA$ from the unit hypersphere $\mathbb{S}^{d_e-1}$.
If $\vz \sim \mathcal{N}(\vzero, \mI_{d_e})$, then $\frac{\vz}{||\vz||}$ is a sample from $\mathbb{S}^{d_e-1}$, so this amounts to normalizing the rows of the usual REMBO projection matrix. 
HeSBO \citep{nayebi19hesbo} avoids clipping to $\mathcal{B}$ by changing the projection matrix $\mA$ so that each row of $\mA$ has a single non-zero element in a random column, which is randomly set to $\pm 1$. In the ambient space, $x_i = \pm y_j$, where $j \sim \textrm{Unif}\{1, d_e\}$, $\pm$ is chosen uniformly at random, and $\vy \in [-1, 1]^{d_e}$.

\section{Challenges with Linear Embeddings}
\label{sec:rembo}
\label{sec:remboissues}

The heuristics just described introduce several issues that impact HDBO performance of linear embedding methods.  We highlight one recent observation from~\citep{binois18}, that most points in the embedding project up outside $\mathcal{B}$, and discuss three novel observations on why existing methods can struggle to learn useful high-dimensional surrogates.

\paragraph{Projection to the facets of $\mathcal{B}$ produces a nonlinear distortion in the function.} 
The function value at any point in the embedding is evaluated as $f(p_{\mathcal{B}}(\mA \vy))$. For points $\vy$ that project up outside of $\mathcal{B}$, this will be a nonlinear mapping despite the use of a linear embedding. This has a powerful, detrimental effect on the ability to model $f$ in the embedding.
\fig{fig:rembo_illustration} provides visualizations of an actual REMBO embedding for two classic test functions, both extended to $D$=100 by adding unused variables. The embedding for the Branin function contains all three optima, however there is clear, nonlinear distortion to the function caused by the clipping to $\mathcal{B}$. The embedding for the Hartmann6 function is even more heavily distorted.
\begin{figure}
    \centering
    \includegraphics{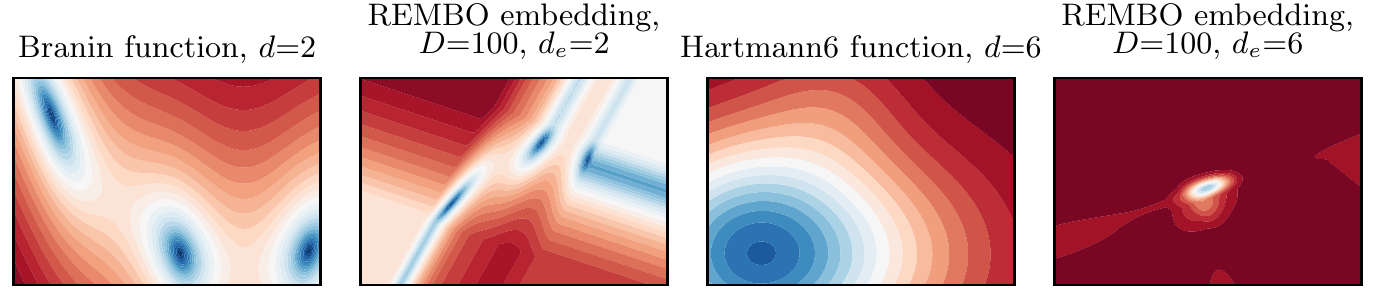}
    \caption{
    A visualization of REMBO embeddings for two test functions. \textit{(Far left)} The Branin function, $d$=2, extended to $D$=100. \textit{(Center left)} A REMBO embedding of the $D$=100 Branin function. \textit{(Center right)} A center slice of the $d$=6 Hartmann6 function, similarly extended to $D$=100. \textit{(Far right)} The same slice of a REMBO embedding of that function. The embedding produces distortions and non-stationarity in the function that render it difficult to model.
    }
    \label{fig:rembo_illustration}
\end{figure}
The distortion induced by clipping to a facet depends on the relative angles of the facet and the true embedding. Projection to a facet essentially induces a non-stationarity in the kernel: each of the $2D$ facets sits at different angles to the true subspace, and so the change in the rate of function variance will differ for each. To correct for the non-stationarity, we would have to estimate the true subspace $\mT$, which with $d \times D$ entries is not feasible for $D$ large.

The appeal of embeddings for HDBO is that they enable the use of standard BO on the embedding. However, these results show that for the REMBO projection with box bounds we may not be able to model in the embedding with a standard (stationary) GP, even if the function is well-modeled in the true low-dimensional space. The problem is especially acute for $d_e>2$ where, as we will see next, nearly all points in the embedding map to one of the $2D$ facets.

\begin{wrapfigure}{r}{0.37\linewidth}
    \centering
    \vspace{-10pt}
    \includegraphics{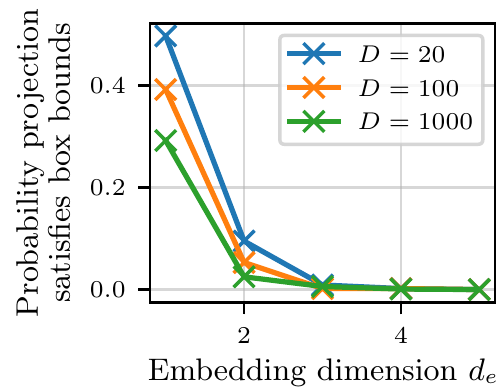}
\caption{The probability a randomly selected point in the REMBO embedding satisfies the ambient box bounds after being projected up. For $d_e>2$, nearly all points in the embedding project outside the bounds.}
\vspace{-20pt}
\label{fig:rembo_p_interior}
\end{wrapfigure}
\paragraph{Most points in the embedding map to the facets of~$\mathcal{B}$.} 
\fig{fig:rembo_p_interior} shows the probability that an interior point in the embedding projects up to the interior of~$\mathcal{B}$, measured empirically (with 1000 samples) by sampling $\vy$ uniformly from $[-\sqrt{d_e}, \sqrt{d_e}]^{d_e}$, sampling $\mA$ with $\mathcal{N}(0, 1)$ entries, and then checking if $\mA \vy \in \mathcal{B}$ . Even for small $D$, with $d_e>2$ practically all of the volume in the embedding projects up outside the box bounds, and is thus clipped to a facet of~$\mathcal{B}$.
This is an issue because it means the optimization will be done primarily on the facets of $\mathcal{B}$. We saw in \fig{fig:rembo_illustration} that the function behaves very differently on points projected to the facets and that the function is non-stationary in these parts of the space. The problem cannot be resolved by simply shrinking the box bounds in the embedding. \citep{binois18} provides an excellent study of this issue and shows that with the REMBO strategy there is no good way to set box bounds in the embedding.

\paragraph{Linear projections do not preserve product kernels.} Although less visible than that produced by the projection to the facets, there is also distortion to interior points just from the linear projection $\mA$. The ARD kernels typically used in GP modeling are product kernels that decompose the covariance into that across each dimension. Inside the embedding, moving along a single dimension will move across all dimensions of the ambient space, at rates depending on the projection matrix. Thus a product kernel in the true subspace will not produce a product kernel in the embedding; this is shown mathematically in Proposition~\ref{prop:kernel}.

\paragraph{Linear embeddings can have a low probability of containing an optimum.} HeSBO avoids the challenges of REMBO related to box bounds: all interior points in the embedding map to interior points of $\mathcal{B}$, and there is no need for the $\normltwo$ projection and so the ability to model in the embedding is improved. However, for $d_e > 1$ the embedding is not guaranteed to contain an optimum with high probability, and in fact the probability of containing an optimum can be low. Consider the example of an axis-aligned true subspace: $f$ operates only on $d$ elements of $\vx$, denoted $\mathcal{I} = \{i_1, \ldots, i_d\}$. For $d=2$ and $d_e \geq 2$, there are three possible HeSBO embeddings: $x_{i_1}$ and $x_{i_2}$ map to different features in the embedding, $x_{i_1} = x_{i_2}$, or $x_{i_1} = -x_{i_2}$. These three embeddings are visualized in the supplement in \sect{appendix:hesbo}. In the first case the embedding successfully captures the entire true subspace and we can expect the optimization to be successful. However, in the other two cases the embedding is only able to reach the diagonals of the true subspace, which will not reach the optimal value, unless $f$ happens to have an optimum on the diagonal. Under a uniform prior on the location of optima, we can compute analytically the probability that the HeSBO embedding contains an optimum (see \sect{appendix:hesbo}).
For instance, with $d=6$, $d_e=12$ gives only a 22\% chance of recovering an optimum.
Relative to REMBO, HeSBO improves the ability to model and optimize in the embedding, but reduces the chance of the embedding containing an optimum. Empirically, this trade-off leads to HeSBO often having better HDBO performance than REMBO. Here we also wish to improve our ability to model and optimize in the embedding, which we will show can be done while maintaining a higher chance of the embedding containing an optimum, further improving HDBO performance.

\section{Learning and Optimizing in Linear Embeddings}
\label{sec:method}
We now describe how the embedding issues described in \sect{sec:remboissues} can be overcome.
Similarly to \citep{binois18}, we define the embedding via a matrix $\mB \in \mathbb{R}^{d_e \times D}$ that projects from the ambient space down to the embedding, and $f_B(\vy) = f(\mB^{\dagger}\vy)$ as the function evaluated on the embedding, where $\mB^{\dagger}$ denotes the matrix pseudo-inverse. The techniques we develop here are applicable to any linear embedding, not just random embeddings.

\paragraph{A Kernel for Learning in a Linear Embedding}
As discussed in \sect{sec:remboissues}, a product kernel over dimensions of the true subspace (\textit{e.g.}, ARD) does not translate to a product kernel over the embedding. This result gives the appropriate kernel structure.
\begin{proposition}\label{prop:kernel}
Suppose the function on the true subspace is drawn from a GP with an ARD RBF kernel: $f_d \sim \mathcal{GP}(m(\cdot), k_{\textrm{RBF}}(\cdot, \cdot))$. For any pair of points in the embedding $\vy$ and $\vy'$,
\begin{equation*}
    \textrm{Cov}[f_B(\vy), f_B(\vy')] = \sigma^2 \exp \left(-(\vy - \vy')^{\top} \bm{\Gamma} (\vy - \vy') \right)\,,
\end{equation*}
where $\sigma^2$ is the kernel variance of $f_d$, and $\bm{\Gamma} \in \mathbb{R}^{d_e \times d_e}$ is symmetric and positive definite.
\end{proposition}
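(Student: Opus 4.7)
The plan is to push the ARD RBF kernel on the true subspace through the composed linear map from the embedding back into that subspace, reducing the covariance to a change of variables inside the quadratic form of the exponential.

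First I would use the linear-embedding assumption $f(\vx) = f_d(\mT \vx)$ to write $f_B(\vy) = f(\mB^{\dagger} \vy) = f_d(\mT \mB^{\dagger} \vy) = f_d(\mS \vy)$, where I define $\mS := \mT \mB^{\dagger} \in \mathbb{R}^{d \times d_e}$. Because $\mS \vy$ and $\mS \vy'$ are deterministic functions of $\vy$ and $\vy'$, and $f_d \sim \mathcal{GP}(m, k_{\textrm{RBF}})$, the covariance $\mathrm{Cov}[f_B(\vy), f_B(\vy')]$ equals the pullback kernel $k_{\textrm{RBF}}(\mS \vy, \mS \vy')$.

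Next I would write the ARD RBF kernel in its standard form $k_{\textrm{RBF}}(\vu, \vu') = \sigma^2 \exp\bigl(-(\vu - \vu')^{\top} \mL (\vu - \vu')\bigr)$, where $\mL = \mathrm{diag}(\ell_1^{-2}, \dots, \ell_d^{-2})$ is diagonal with strictly positive entries (the inverse squared ARD length-scales). Substituting $\vu = \mS \vy$, $\vu' = \mS \vy'$ and factoring $\mS$ out of $\mS \vy - \mS \vy' = \mS(\vy - \vy')$ yields
\[
\mathrm{Cov}[f_B(\vy), f_B(\vy')] = \sigma^2 \exp\bigl(-(\vy - \vy')^{\top} \bm{\Gamma} (\vy - \vy')\bigr), \qquad \bm{\Gamma} := \mS^{\top} \mL \mS.
\]
Symmetry of $\bm{\Gamma}$ is immediate from $\mL^{\top} = \mL$, and $\bm{\Gamma}$ is positive semidefinite by the identity $\vv^{\top} \bm{\Gamma} \vv = \|\mL^{1/2} \mS \vv\|^{2} \geq 0$.

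The main obstacle is upgrading positive semidefinite to strictly positive definite: the inequality is strict iff $\mS = \mT \mB^{\dagger}$ has trivial null space, which requires $d_e \leq d$ and that $\mB^{\dagger}$ map into a subspace on which $\mT$ is injective. In the regime $d_e \geq d$ that motivates HDBO, $\mS$ generically has a nontrivial kernel and the induced form is only PSD, degenerate along $\ker \mS$; strict positive definiteness then has to be imposed externally, e.g.\ by learning a full PD matrix $\bm{\Gamma}$ directly rather than factoring it as $\mS^{\top} \mL \mS$. The structural content of the proposition — that the appropriate embedding kernel is a Mahalanobis exponentiated quadratic rather than a product kernel across embedding coordinates — is independent of this subtlety, and is what justifies the kernel choice made in the rest of the section.
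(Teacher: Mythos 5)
Your proof follows exactly the paper's argument: compose the maps to get $f_B(\vy) = f_d(\mT\mB^{\dagger}\vy)$, pull back the ARD RBF kernel, and set $\bm{\Gamma} = (\mT\mB^{\dagger})^{\top}\mD(\mT\mB^{\dagger})$. Your caveat about positive semidefiniteness is correct and in fact sharper than the paper, whose proof asserts positive definiteness solely ``because $\mD$ is positive definite''---a step that fails whenever $\mT\mB^{\dagger}$ has a nontrivial null space, which is unavoidable for $d_e > d$ since $\bm{\Gamma}$ then has rank at most $d < d_e$.
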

\begin{proof}
To determine the covariance function in the embedding, we first project up to the ambient space and then project down to the true subspace: $f_B(\vy) = f(\mB^{\dagger}\vy) = f_d(\mT \mB^{\dagger}\vy)$. Then,
\begin{align*}
    \textrm{Cov}[f_B(\vy), f_B(\vy')] &= \textrm{Cov}[f_d(\mT \mB^{\dagger}\vy), f_d(\mT \mB^{\dagger}\vy')]\\
    &=\sigma^2 \exp \left(-(\mT \mB^{\dagger}\vy - \mT \mB^{\dagger}\vy')^{\top} \mD (\mT \mB^{\dagger}\vy - \mT \mB^{\dagger}\vy') \right),
\end{align*}
where $\mD = \text{diag}\left( \left[ \frac{1}{2 \ell_1^2}, \ldots, \frac{1}{2 \ell_d^2} \right] \right)$ are the inverse lengthscales. Let $\bm{\Gamma} = (\mT \mB^{\dagger})^{\top} \mD (\mT \mB^{\dagger})$. Because $\mD$ is positive definite, $\bm{\Gamma}$ is symmetric and positive definite.
\end{proof}
This kernel replaces the ARD Euclidean distance with a Mahalanobis distance, and so we refer to it as the Mahalanobis kernel. A similar result was found by \citep{garnett2014linear}, which showed that an RBF kernel in a linear subspace implies a Mahalanobis kernel in the high-dimensional, ambient space. Similar kernels have also been used for GP regression in other settings \citep{vivarelli1999discovering, snelson2016noise}.

Prop. \ref{prop:kernel} shows that the impact of the linear projection on the kernel can be correctly handled by fitting a $\frac{d_e(d_e+1)}{2}$-parameter distance metric rather than the typical $d_e$-parameter ARD metric. We handle uncertainty in $\bm{\Gamma}$ by posterior sampling from a Laplace approximation of its posterior; this is described in \sect{appendix:gamma} in the supplement.
The use of this kernel is vital for obtaining good model fits in the embedding, as shown in \fig{fig:ard_mahalanobis}. For \fig{fig:ard_mahalanobis}, a 6-d random linear embedding was generated for the Hartmann6 $D$=100 problem, and 100 training and 50 test points were randomly sampled that mapped to the interior of $\mathcal{B}$ (so, they have no distortion from clipping). The ARD RBF kernel entirely failed to learn the function on the embedding and simply predicted the mean; the Mahalanobis kernel made accurate out-of-sample predictions. Further details are given in \sect{appendix:gamma}, along with learning curves.

A similar argument as Prop. \ref{prop:kernel} shows that with linear embeddings, stationarity in the true subspace implies stationarity in the embedding; see \sect{appendix:stationary}. As discussed there, this result does not hold with clipping to box bounds, which effectively produces a nonlinear embedding.

\paragraph{Avoiding Nonlinear Projections}
The most significant distortions seen in \fig{fig:rembo_illustration} result from clipping projected points to $\mathcal{B}$. We can avoid this by constraining the optimization in the embedding to points that do not project up outside the bounds, that is, $\mB^{\dagger} \vy \in \mathcal{B}$. Let $\alpha(\vy)$ be the acquisition function evaluated in the embedding. We select the next point to evaluate by solving
\begin{equation}\label{eq:acq}
    \max_{\vy \in \mathbb{R}^{d_e}} \alpha(\vy) \quad\textrm{ subject to }\quad -\vone \leq \mB^{\dagger} \vy  \leq \vone\,.
\end{equation}
Box bounds on $\vy$ are not required. The constraints $-\vone \leq \mB^{\dagger} \vy  \leq \vone$ are all linear, so they form a polytope and can be handled with off-the-shelf optimization tools; we use Scipy's SLSQP. \sect{appendix:embedding} in the supplement provides visualizations of the embedding subject to these constraints. Within this space, the projection is entirely linear and can be effectively modeled with the Mahalanobis kernel.

\paragraph{The Probability the Embedding Contains an Optimum}

Restricting the embedding with the constraints in (\ref{eq:acq}) eliminates distortions from clipping to $\mathcal{B}$, but it also reduces the volume of the ambient space that can be reached from the embedding and thus reduces the probability that the embedding contains an optimum. 
To understand the performance of BO in the linear embedding, it is critical to understand this probability, which we denote $P_{\textrm{opt}}$. Recall that with clipping, the REMBO theoretical result does not hold when function evaluations are restricted to box bounds, and so even REMBO generally has $P_{\textrm{opt}} < 1$. We now describe how $P_{\textrm{opt}}$ can be estimated, and increased.

\begin{figure}[t]
	\centering
		\begin{minipage}[t]{0.48\textwidth}
	\centering
		\includegraphics{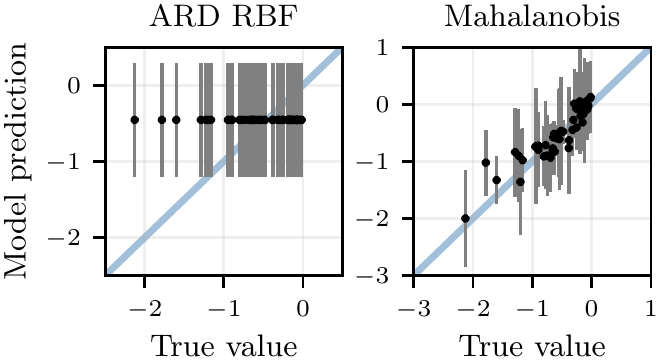}
		\caption{Predictions (mean, and in error bars two standard deviations, of the posterior predictive distribution) on a test set of 50 points from a 6-d embedding of the Hartmann6 $D$=100 problem, with models fit to 100 training points. The ARD RBF kernel is unable to learn in the embedding and predicts the mean. The Mahalanobis kernel makes accurate test-set predictions.}
		\label{fig:ard_mahalanobis}
	\end{minipage}
	\hfill
	\begin{minipage}[t]{0.49\textwidth}
	\centering
		\includegraphics{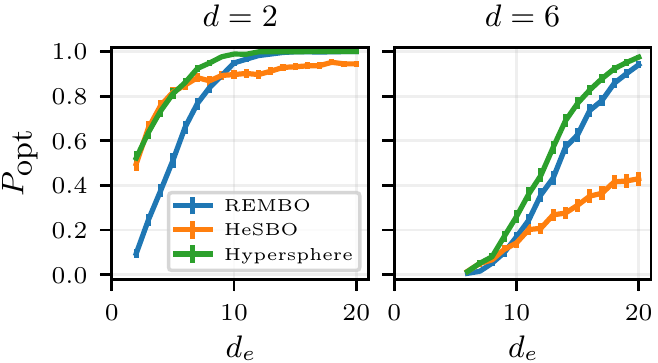}
		\caption{Probability the embedding contains an optimum ($P_{\textrm{opt}}$) when restricted to the constraints of (\ref{eq:acq}), under a uniform prior for the location of the optima and $D$=100, for three embedding strategies. Setting $d_e > d$ rapidly increases $P_{\textrm{opt}}$, and high probabilities can achieved with reasonable values of $d_e$. Hypersphere sampling produces the best embedding, particularly for $d$ small.}
		\label{fig:lp_solns}
	\end{minipage}
\end{figure}

$P_{\textrm{opt}}$ depends on where optima are in the ambient space---for instance, an optimum at $\vzero$ is always contained in the embedding. Suppose the true subspace has an optimum at $\vz^*$. Then, $\mathcal{O}(\mT, \vz^*) = \{\vx : \mT \vx = \vz^*\}$ is the set of optima in the ambient space. We wish to determine if these can be reached from the embedding. The points $\vx$ that can be reached from the embedding are those for which there exists a $\vy$ in the embedding that projects up to $\vx$, that is, $\mB^{\dagger} \vy = \vx$. Since the embedding is produced from $\mB \vx$,
the points that can be reached from the embedding are $\mathcal{E}(\mB) = \{\vx : \mB^{\dagger} \mB \vx = \vx \}$. The embedding contains an optimum if and only if the intersection $\mathcal{O}(\mT, \vz^*) \cap \mathcal{E}(\mB) \cap \mathcal{B}$ is non-empty. Given a prior for the locations of optima (that is, over $\mT$ and $\vz^*$),
\begin{equation}\label{eq:p_contains_optimizer}
    P_{\textrm{opt}} = \mathbb{E}_{\mB, \mT, \vz^*} \left[ \1_{\mathcal{O}(\mT, \vz^*) \cap \mathcal{E}(\mB) \cap \mathcal{B} \neq \varnothing}  \right]\,.
\end{equation}
Importantly, $\mathcal{O}(\mT, \vz^*)$, $\mathcal{E}(\mB)$, and $\mathcal{B}$ are all polyhedra, so their intersection can be tested by solving a linear program (see \sect{appendix:lp} in the supplement). The expectation can be estimated with Monte Carlo sampling from the prior over $\mT$ and $\vz^*$, and from the chosen generating distribution of $\mB$.

For our analysis here, we give $\mT$ a uniform prior over axis-aligned subspaces as described in \sect{sec:rembo}, and we give $\vz^*$ a uniform prior in that subspace. Under these uniform priors, we can evaluate (\ref{eq:p_contains_optimizer}) to compute $P_{\textrm{opt}}$ as a function of $\mB$, $D$, $d$, and $d_e$.
\fig{fig:lp_solns} shows these probabilities for $D$=100 as a function of $d$ and $d_e$, with three strategies for generating the projection matrix: the REMBO strategy of $\mathcal{N}(0, 1)$, the HeSBO projection matrix, and the unit hypersphere sampling described in \sect{sec:background}. Increasing $d_e$ above $d$ rapidly improves the probability of containing an optimum. For $d=6$, with $d_e=6$ $P_{\textrm{opt}}$ is nearly 0, while increasing $d_e$ to 12 is sufficient to raise $P_{\textrm{opt}}$ to 0.5 and with $d_e=20$ it is nearly 1. Across all values of $d$ and $d_e$, hypersphere sampling produces the embedding with the best chance of containing an optimum. 
\sect{appendix:lp} shows $P_{\textrm{opt}}$ for more values of $D$ and $d$. Hypersphere sampling and selecting $d_e > d$ are important techniques for maintaining a high $P_{\textrm{opt}}$.

\paragraph{A New Method for BO with Linear Embeddings}

We combine the results and insight gained above into a new method for HDBO, called adaptive linear embedding BO (ALEBO), since the kernel metric and embedding bounds adapt with $\mB$. It is summarized in Algorithm \ref{algo:method}.

\begin{algorithm2e}[tbh]
\KwData{$D$, $d_e$, $n_{\textrm{init}}$, $n_{\textrm{BO}}$.}
\KwResult{Approximate optimizer $\vx^*$.}
Generate a random projection matrix $\mB$ by sampling $D$ points from the hypersphere $\mathbb{S}^{d_e-1}$.

Generate $n_{\textrm{init}}$ random points $\vy^i$ in the embedding using rejection sampling to satisfy the constraints of (\ref{eq:acq}).

Let $\mathcal{D} = \{(\vy^i, f(\mB^{\dagger} \vy^i) \}_{i=1}^{n_{\textrm{init}}}$ be the initial data.

\For{$j = 1, \ldots, n_{\textrm{BO}}$}{
Fit a GP to $\mathcal{D}$ with the Mahalanobis kernel, using posterior sampling (\sect{appendix:gamma}).

Use the GP to find $\vy^j$ that maximizes the acquisition function according to (\ref{eq:acq}).

Update $\mathcal{D}$ with $(\vy^j, f(\mB^{\dagger} \vy^j))$.
}
\Return{$\mB^{\dagger} \vy^*$, for the best point $\vy^*$.}
\caption{ALEBO for linear embedding BO.}
\label{algo:method}
\end{algorithm2e}

\section{Benchmark Experiments}
\label{sec:experiments}
\begin{figure}[t]
    \centering
    \includegraphics{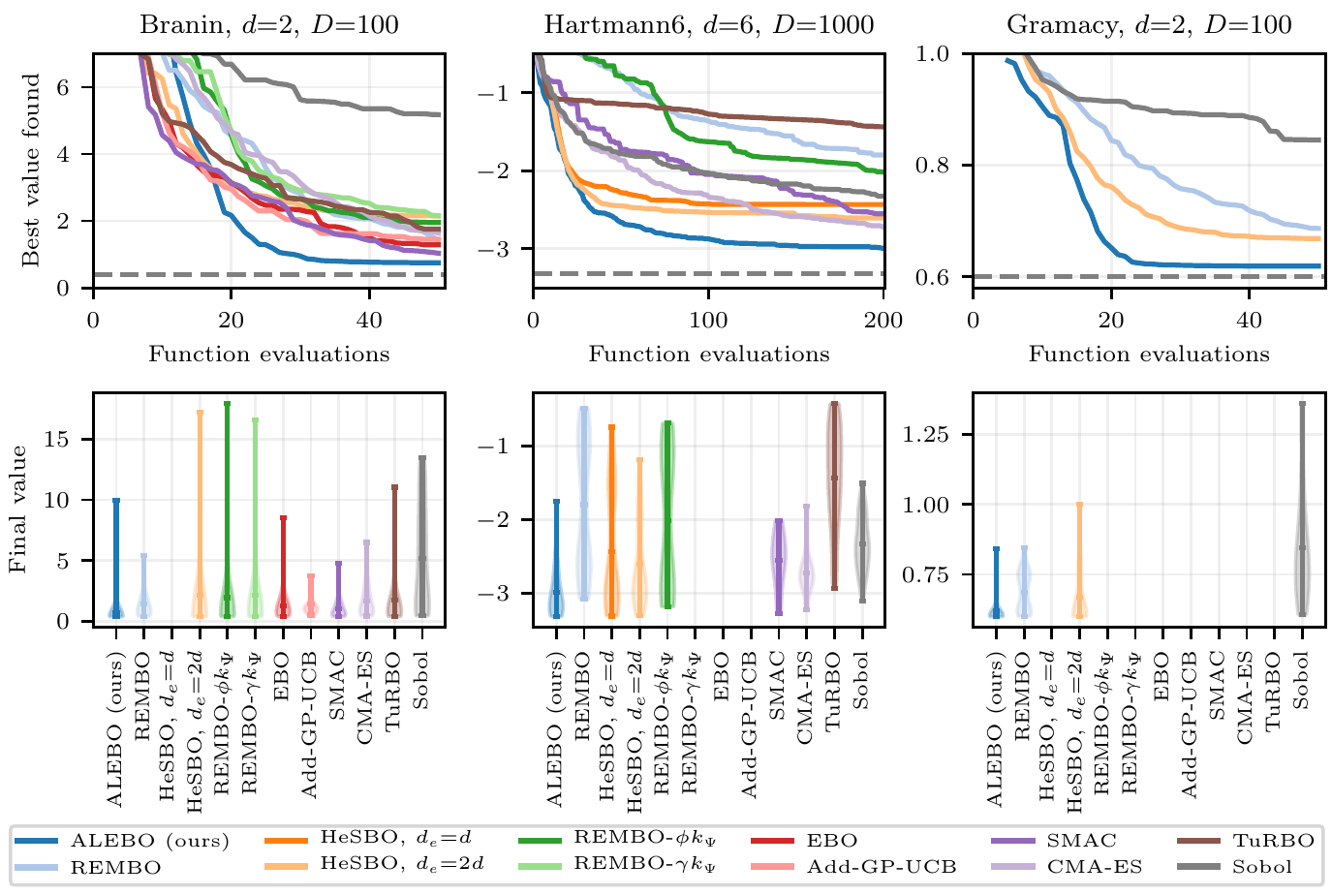}
    \caption{Optimization (minimization) performance on three HDBO benchmarks. (\textit{Top}) Best value by each iteration, averaged over repeated runs. (\textit{Bottom}) Distribution of the final best value. For all three tasks, ALEBO achieved the best average performance, with low variance.}
    \label{fig:synthetic_results}
    \vspace{-5pt}
\end{figure}

Past work has shown REMBO can perform poorly even on problems that have a true, linear subspace, despite this being the setting that REMBO should be best-suited for. A major source of this poor performance are the modeling issues described in \sect{sec:rembo}, which we demonstrate by showing that with the new developments in \sect{sec:method}, ALEBO can achieve state-of-the-art performance on this class of problems (those with a true linear subspace). We compare performance to a broad selection of existing methods: REMBO; REMBO variants $\phi k_\Psi$, $\gamma k_\Psi$, and HeSBO; additive kernel methods Add-GP-UCB \citep{kandasamy2015high} and Ensemble BO (EBO) \citep{wang2018batched}; SMAC; CMA-ES, an evolutionary strategy \citep{hansen03}; TuRBO, trust region BO \citep{eriksson19}; and quasirandom search (Sobol). \sect{appendix:experiments} additionally compares to LineBO \citep{kirschner19}. For ALEBO we took $d_e=2d$ for these experiments. In their evaluation of HeSBO, \citep{nayebi19hesbo} used $d_e=2d$ for $d=2$ but $d_e=d$ on the Hartmann6 problem; we evaluate both choices.

\fig{fig:synthetic_results} shows optimization performance for three HDBO tasks, averaged over 50 runs: Branin extended to $D$=100, Hartmann6 extended to $D$=1000, and Gramacy extended to $D$=100. The Gramacy problem \citep{gramacy16} includes two black-box constraints, which are naturally handled by linear embedding methods (see \sect{appendix:constrained}; an advantage of linear embedding HDBO is that it is agnostic to the acquisition function). For the $D$=1000 problem, REMBO-$\gamma k_\Psi$, EBO, and Add-GP-UCB did not finish a single run after 24 hours and so were terminated. These methods, along with TuRBO, SMAC, and CMA-ES, also do not support blackbox constraints and so were not used for Gramacy. 
\sect{appendix:experiments} provides additional details, additional experimental results (plots of log regret and error bars), two additional benchmark problems (including a non-axis-aligned problem), and an extended discussion of the results.

Consistent with past studies, REMBO performance was variable, and in the $D$=1000 problem it performed worse than random. With the adjustments in \sect{sec:method}, ALEBO significantly improved HDBO performance relative to other linear embedding methods, and achieved the best average optimization performance overall. ALEBO also had low variance in the final best-value, which is important in real applications where one can typically only run one optimization run. These results show that with the adjustments in ALEBO, linear embedding BO is the best-performing method on linear subspace benchmark problems, as it ought to be. \sect{appendix:experiments} gives a sensitivity analysis with respect to $D$ and $d_e$ (robust for $d_e>d$), and an ablation study of the different components of ALEBO.

\section{Real-World Problems}\label{sec:realworld}
\paragraph{Constrained Neural Architecture Search}
\begin{wrapfigure}{r}{0.54\linewidth}
    \centering
    \vspace{-12pt}
    \includegraphics{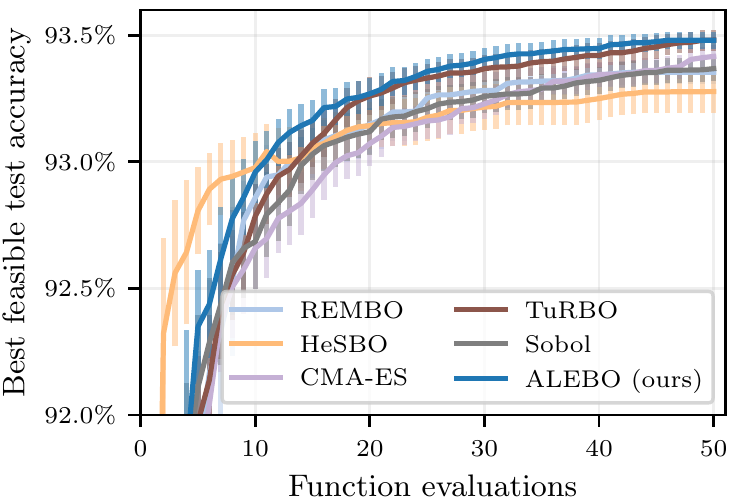}
\caption{Best-feasible CIFAR10 test-set accuracy by each iteration for the $D=36$ constrained NAS problem, showing mean and two standard errors over 100 repeated runs. ALEBO was a best-performing method, and the only embedding method that outperformed random search.
}
\vspace{-10pt}
\label{fig:nasbench}
\end{wrapfigure}
We evaluated ALEBO performance on constrained neural architecture search (NAS) for convolutional neural networks using models from NAS-Bench-101 \citep{nasbench}. The NAS problem was to design a cell topology defined by a DAG with 7 nodes and up to 9 edges, which includes designs like ResNet \cite{resnet} and Inception \cite{inception}. We created a $D=36$ parameterization, producing a HDBO problem. The objective was to maximize CIFAR-10 test-set accuracy, subject to a constraint that training time was less than 30 mins; see Sec. \ref{appendix:nas} for full details. Sample efficiency is critical for NAS due to long training times on GPUs or TPUs. Fig. \ref{fig:nasbench} shows optimization performance on this problem. ALEBO significantly improved accuracy relative to the other linear embedding approaches, which did not improve over Sobol, and was a best-performing method.
\paragraph{Policy Search for Robot Locomotion}
We applied ALEBO to the problem of learning walking controllers for a simulated hexapod robot. Sample efficiency is crucial in robotics as collecting data on real robots is time consuming and can cause wear-and-tear on the robot. We optimized the walking gait of the ``Daisy'' robot \citep{RoboticsDaisy}, which has 6 legs with 3 motors in each leg, and was simulated in PyBullet~\citep{pybullet}.
The goal was to learn policy parameters that enable the robot to walk to a target location while avoiding high joint velocities and height deviations; details are given in \sect{sec:appendix:daisy}.
We use a Central Pattern Generator (CPG)~\citep{crespi2008online} with $D=72$ to control the robot. The $72$-dimensional controller assumes each joint is independent of the others; a lower-dimensional embedding can be constructed by coupling multiple joints. For example, the tripod gait in hexapods assumes three sets of legs synced and out of phase with the remaining three legs, which produces an $11$-dimensional parameterization.
The existence of such low-dimensional parameterizations motivates the use of embedding methods for this problem, though there is no known \textit{linear} low-dimensional representation.

\fig{fig:expdaisy} shows optimization performance on this task. ALEBO performed the best of the linear embedding methods, and also outperformed EBO, SMAC, and Sobol. REMBO performed poorly on this problem, only slightly better than random. The ALEBO results show that REMBO did not do poorly because linear embedding methods cannot learn on this problem, rather it was because of the issues described and corrected in this paper.
CMA-ES outperformed all of the HDBO methods. CMA-ES is model free, suggesting that the underlying models in the HDBO methods are not well suited for this particular, real problem. This is likely due to discontinuities in the function: in some parts of the space a small perturbation can cause the robot to fall and significantly reduce the reward, while in other parts the reward will be much smoother. Expert tuning of the tripod gait can achieve reward values above 40, much better than any gait found in the optimizations here. ALEBO enables linear embedding methods to reach their full potential, but these results show that there is still much room for additional work in HDBO.

\begin{figure}[t]
	\centering
	\includegraphics{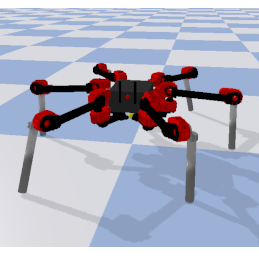}
	\hspace{.12in}
	\includegraphics{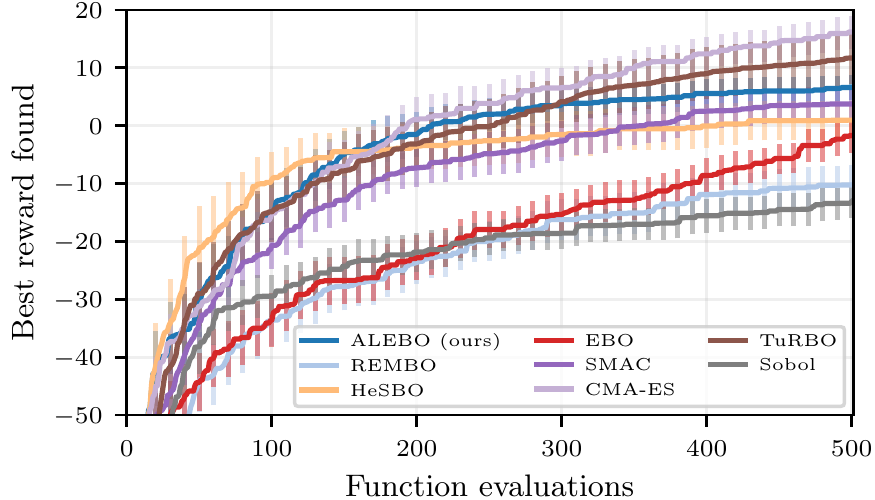}
	\caption{(\textit{Left}) The simulated hexapod robot Daisy. (\textit{Right}) Optimization (maximization) performance on the $D=72$ locomotion task, showing mean and two standard errors over 100 runs. ALEBO significantly improved over REMBO, but all HDBO methods were outperformed by CMA-ES.}
	\label{fig:expdaisy}
\end{figure}

\section{Conclusion}
\label{sec:discussion}
Our work highlights the importance of two basic requirements for an embedding to be useful for optimization that are often not examined critically by the literature: 1) the function must be well-modeled on the embedding; and 2) the embedding should contain an optimum. To the first point, we showed how polytope constraints on the embedding eliminate boundary distortions, and we derived a Mahalanobis kernel appropriate for GP modeling in a linear embedding.
To the second, we developed an approach for computing the probability that the embedding contains an optimum, which we used to construct embeddings with a high chance of containing an optimum, via hypersphere sampling and selecting $d_e > d$. With ALEBO we verified empirically that addressing these issues resolved the poor REMBO performance on benchmark tasks and real-world problems.

These same considerations are important for any embedding. When constructing a VAE for BO it will be equally important to ensure the function remains well-modeled in the embedding, to handle box bounds in an appropriate way, and to ensure the embedding has a high chance of containing an optimum.
End-to-end learning of the embedding and the GP can produce an embedding amenable to modeling, though potentially requiring a large number of evaluations to learn the embedding. Adapting nonlinear embeddings learned from auxiliary data for HDBO is an important area of future work. Clipping to box bounds will hurt modelability in a nonlinear embedding in the same way as in a linear embedding. Here we incorporated linear constraints into the acquisition function optimization; for a VAE these constraints will be nonlinear, but their gradients can be backpropped and so constrained optimization can be done in a similar way.

The experiment results show that linear embedding HDBO, and ALEBO in particular, can be a valuable tool for high-dimensional optimization. On real-world problems, we found that local-search methods (CMA-ES and TuRBO) can be highly competitive, or even best-performing. These particular problems have discontinuities that make global modeling difficult and favor local search, however there are other settings where embedding HDBO will be the best choice. The appeal of using embeddings for HDBO is that all of the BO techniques developed over the past two decades can be directly applied to high-dimensional problems. Settings where BO is not matched by local search include cost-aware \citep{snoek2012practical}, multi-task \citep{swersky13mtbo}, and multi-fidelity \citep{wu20mf} optimization. These methods can be directly adapted to HDBO by applying them inside a random linear embedding, and the techniques described in this paper will ensure the best possible performance.


\clearpage 
\section*{Broader Impact}

Bayesian optimization is a powerful optimization technique used in a wide range of industries and applications, such as robotics~\citep{Lizotte2007Automatic,Calandra2015a,Rai2018Bayesian}, internet tech companies \citep{vizier, letham2019noisyei}, designing novel molecules for pharmaceutics \citep{gomez2018chemical}, material design for increasing efficiency of solar cells \citep{zhang20}, and aerospace engineering \citep{lam18}.
All of these settings have high-dimensional optimization problems, and advances in BO will reflect on improved capabilities on these fields as well. 
We have fully open-sourced our code for ALEBO to be available for researchers and practitioners in these fields, and many others. 
The ability to optimize a larger number of parameters than has previously been possible will bring further improvements to and further accelerate work in these areas.



\section*{Acknowledgements}
R.C. thanks Marc Deisenroth and Frank Hutter for insightful discussions, and Victor-Philipp Negoescu, Mark Prediger, Florian Schnell for preliminary experiments back in 2013.

\bibliographystyle{abbrvnat}
\bibliography{refs}

\newpage
\onecolumn
\begin{center}
\textbf{\large Supplemental Materials: Re-Examining Linear Embeddings for High-Dimensional Bayesian Optimization}
\end{center}
\setcounter{section}{0}
\setcounter{proposition}{0}
\setcounter{equation}{0}
\setcounter{figure}{0}
\setcounter{table}{0}
\setcounter{page}{1}
\renewcommand{\thesection}{S\arabic{section}}
\renewcommand{\theproposition}{S\arabic{proposition}}
\renewcommand{\theequation}{S\arabic{equation}}
\renewcommand{\thefigure}{S\arabic{figure}}
\renewcommand{\thetable}{S\arabic{table}}
\renewcommand{\thealgocf}{S\arabic{algocf}} 
This supplemental material contains a number of additional results and analyses to support the main text.

\section{HeSBO Embeddings}\label{appendix:hesbo}

We consider HeSBO embeddings in the case of a random axis-aligned true subspace, and a uniform prior on the location of the optimum within that subspace. As explained in \sect{sec:rembo}, with $d=2$ and this prior, regardless of $d_e$ or $D$ there are three possible embeddings: (1) each of the active parameters are captured by a parameter in the embedding; (2) the embedding is constrained to the diagonal $x_{i_1} = x_{i_2}$; or (3) the embedding is constrained to the diagonal $x_{i_1} = -x_{i_2}$. Fig. \ref{fig:hesbo_embeddings} shows these three embeddings for the Branin problem from the top row of Fig. \ref{fig:rembo_illustration}.

\begin{figure}[!b]
    \centering
    \includegraphics[width=5.5in]{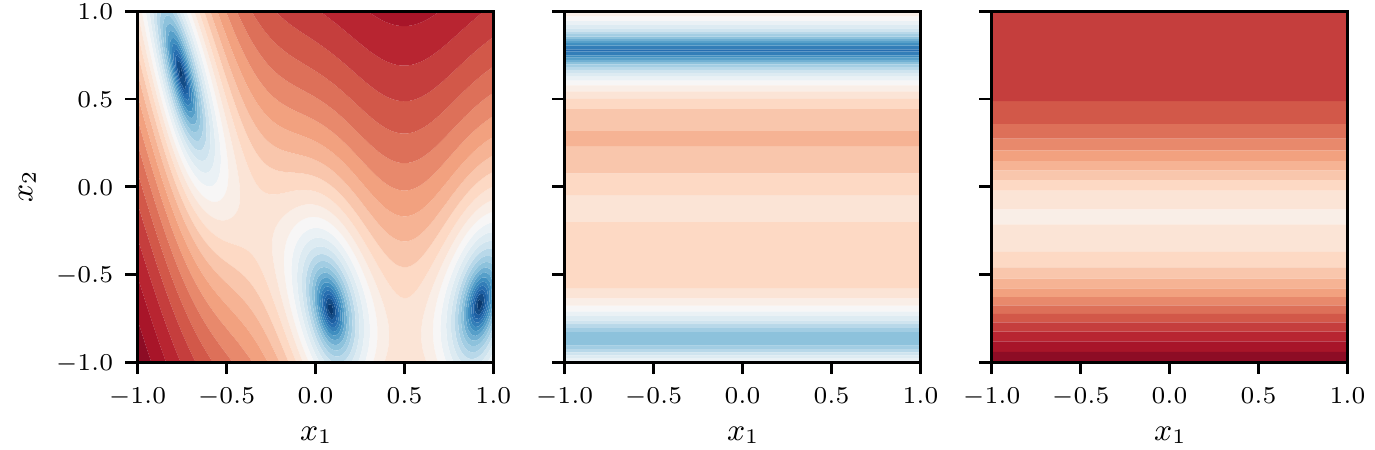}
    \caption{Three possible HeSBO embeddings of the $d=2$ Branin function. \textit{(Left}) The first embedding fully captures the function, and thus captures all three optima. \textit{(Middle)} The second is restricted to the subspace $x_1 = -x_2$. This subspace does not contain an optimum, but comes fairly close. \textit{(Right)} The third embedding is restricted to the subspace $x_1=x_2$ and does not come close to any optimum.}
    \label{fig:hesbo_embeddings}
\end{figure}

Within the first embedding, the optimal value of 0.398 can be reached. Within the second, the best value is 0.925 and within the third it is 17.18. Under a uniform prior on the location of the optimum within a random axis-aligned true subspace, it is easy to compute the probability that the HeSBO embedding contains an optimum:
\begin{equation}\label{eq:hesbo_prob}
    P_{\textrm{opt}}(d_e) = \frac{d_e!}{(d_e-d)! d_e^d}.
\end{equation}
For $d=2$, this is exactly the probability of the first embedding shown in Fig. \ref{fig:hesbo_embeddings}. This probability does not depend on $D$, but increases with $d_e$, and is the probability shown in Fig. \ref{fig:lp_solns}.

\section{The Mahalanobis Kernel}\label{appendix:gamma}
When fitting the Mahalanobis kernel derived in Proposition \ref{prop:kernel}, we use an approximate Bayesian treatment of $\bm{\Gamma}$ to improve model performance while still maintaining tractability. We propagate uncertainty in $\bm{\Gamma}$ into the GP posterior by first constructing a posterior for $\bm{\Gamma}$ using a Laplace approximation with a diagonal Hessian, and then drawing $m$ samples from that posterior. The marginal posterior for $f(\vy)$ can then be approximated as:
\begin{equation*}
    p(f(\vy)) \approx \frac{1}{m} \sum_{i=1}^m p(f(\vy) | \bm{\Gamma}^i).
\end{equation*}
Because of the GP prior, each conditional posterior $p(f(\vy) | \bm{\Gamma}^i)$ is a normal distribution with known mean $\mu_i$ and variance $\sigma_i^2$. Thus the posterior $p(f(\vy))$ is a mixture of Gaussians, which we can approximate using moment matching:
\begin{equation*}
    p(f(\vy)) \approx \mathcal{N} \left( \frac{1}{m} \sum_{i=1}^m \mu_i ,  \frac{1}{m} \sum_{i=1}^m \sigma^2_i   + \textrm{Var}_i[\mu_i] \right).
\end{equation*}
We do this to maintain a Gaussian posterior, under which acquisition functions like EI have analytic form and can easily be optimized, even subject to constraints as in (\ref{eq:acq}).

As described in \sect{sec:method}, we show the importance of the Mahalanobis kernel using models fit to data from the Hartmann6 $D$=100 function. We generated a projection matrix $\mB$ using hypersphere sampling to define a 6-d linear embedding. We then generated a training set (100 points) and a test set (50 points) within that embedding---that is, within the polytope given by (\ref{eq:acq})---using rejection sampling. We fit three GP models with different kernels to the training set, and then evaluated each on the test set: a typical ARD RBF kernel in 6 dimensions, the Mahalanobis kernel using a point estimate for $\bm{\Gamma}$, and the Mahalanobis kernel with posterior marginalization for $\bm{\Gamma}$ as described above.

\begin{figure}
\centering
\includegraphics{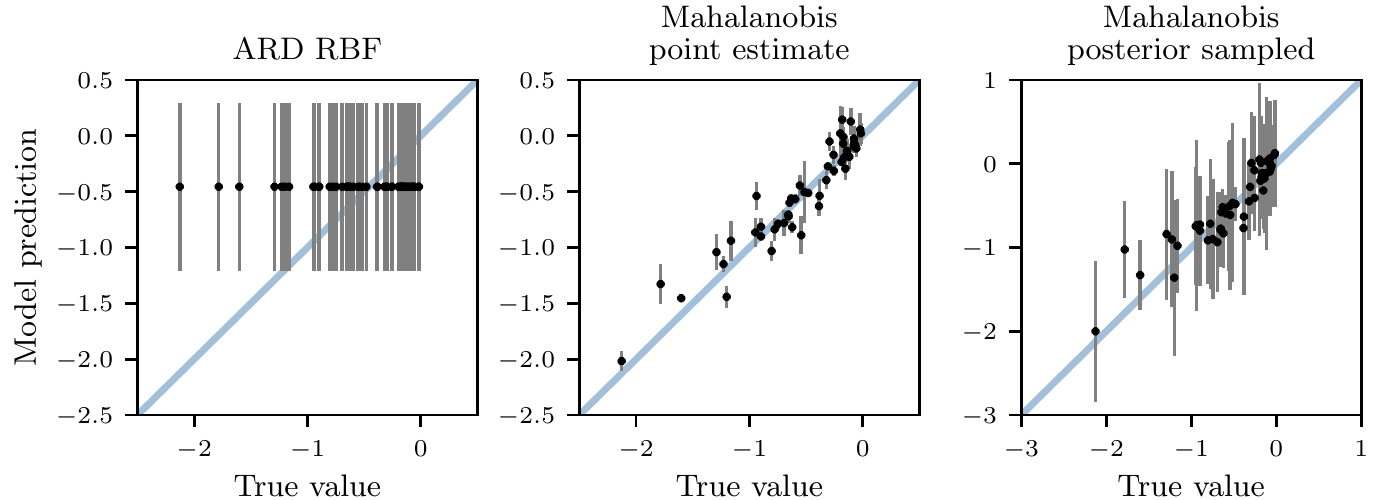}
\caption{Test-set model predictions for three GP kernels on the same train/test data generated by evaluating the Hartmann6 $D$=100 function on a fixed linear embedding. A typical ARD kernel fails to learn and predicts the mean. The Mahalanobis kernel predicts well, and posterior sampling is important for getting reasonable predictive variance.}
\label{fig:model_predictions}
\end{figure}

\fig{fig:model_predictions} compares model predictions for each of these models with the actual test-set outcomes; results here are the same as in \fig{fig:ard_mahalanobis} with the addition of the Mahalanobis point estimate kernel. With an ARD RBF kernel, the GP predicts the function mean everywhere, which is typical behavior of a GP that has failed to learn the function. With the same training data, the Mahalanobis kernel is able to make accurate predictions on the test set. Using a point estimate for $\bm{\Gamma}$ significantly underestimates the predictive variance, which is rectified by using posterior sampling as described above. In BO exploration is driven by model uncertainty, so well-calibrated uncertainty intervals are especially important.

\begin{figure}
\centering
\includegraphics{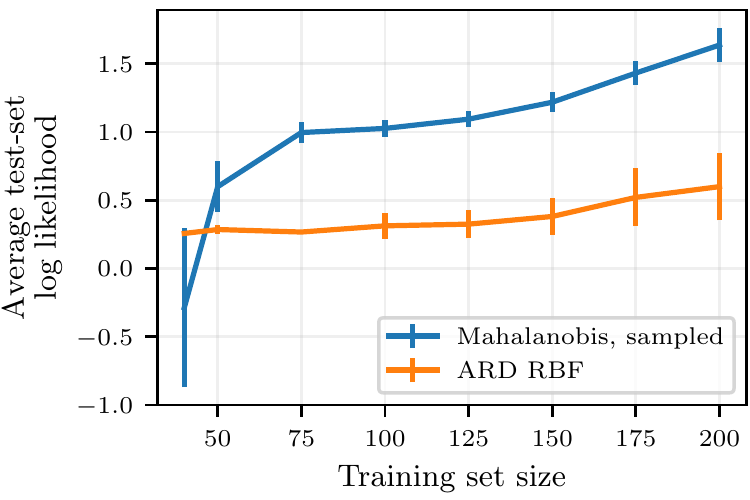}
\caption{Average test-set log likelihood as a function of training set size, for training sets randomly sampled from a fixed linear embedding. Log marginal probabilities were averaged over a fixed test set of 1000 random points. For each training set size, 20 random training sets were drawn of that size and the figure shows the average result over those draws (with error bars for two standard errors). The ARD RBF kernel continues to predict the mean as the training set size is increased, while the Mahalanobis kernel is able to learn as the training set is expanded.}
\label{fig:log_likelihood}
\end{figure}

\fig{fig:log_likelihood} evaluates the predictive log marginal probabilities for the ARD RBF kernel and the Mahalanobis kernel with posterior sampling across a wide range of training sets with different sizes (without posterior sampling, \fig{fig:model_predictions} shows that the Mahalanobis point estimate significantly under covers and so has very poor predictive log marginal probabilities). We used the same linear embedding and Hartmann6 $D$=100 function used in \fig{fig:model_predictions} to sample 1000 test points which were held fixed. For each of 8 training set sizes ranging from 40 to 200, we randomly sampled 20 training sets from the embedding. For each training set, we fit the two GPs, made predictions on the 1000 test points, and then computed the average marginal log probability of the true values. \fig{fig:log_likelihood} shows that as the training set size increased from 40 to 200, the ARD RBF kernel could only improve slightly on predicting the mean, as it did in \fig{fig:model_predictions}; even 200 points in the 6-d embedding were not sufficient to significantly improve the model. For small training set sizes, the Mahalanobis kernel (with sampling) had high variance in log likelihood, as it has the potential to overfit and thus under cover. But for training set sizes of 50 and greater it had better predictive log likelihood than the ARD RBF kernel, and continued to learn as the training set size was increased. For small datasets, the Mahalanobis kernel can overfit and thus have poor predictive likelihood, but for the purposes of BO, overfitting can be better than not fitting at all (predicting the mean), even when predicting the mean has better predictive log likelihood. This can be seen in the optimization results (Figs. \ref{fig:synthetic_results} and \ref{fig:log_regrets}) where ALEBO showed strong performance even with less than 50 iterations.

\section{Stationarity in the Embedding}\label{appendix:stationary}
A stationary kernel is one that depends only on $\vx - \vx'$, not on the individual values of $\vx$ and $\vx'$, and is thus invariant to translation \citep{rasmussen06}. The following result shows that with linear embeddings, stationarity in the true function implies stationarity in the embedding.
\begin{proposition}\label{prop:stationary}
Suppose the function on the true subspace is drawn from a GP with a stationary kernel: $f_d \sim \mathcal{GP}(m(\cdot), k(\cdot, \cdot))$ where $k(\vz, \vz') = \kappa(\vz - \vz')$. Let $g(\vy) = \mT \mB^{\dagger}\vy$. For any pair of points in the embedding $\vy$ and $\vy'$,
\begin{equation*}
    \textrm{Cov}[f_B(\vy), f_B(\vy')] = \tilde{\kappa}(\vy - \vy')\,,
\end{equation*}
where $\tilde{\kappa} = \kappa \circ g$. The implied kernel on the embedding is thus stationary.
\end{proposition}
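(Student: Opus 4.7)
The plan is to mirror the computation in Proposition~\ref{prop:kernel}, but substituting a generic stationary kernel $\kappa$ for the specific RBF form. The key observation is that the composition $g(\vy) = \mT \mB^{\dagger}\vy$ is a \emph{linear} map, so it commutes with subtraction, and a stationary kernel only cares about differences of its inputs.

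First I would expand $f_B$ using the linear embedding assumption $f(\vx) = f_d(\mT \vx)$, giving $f_B(\vy) = f(\mB^{\dagger}\vy) = f_d(\mT \mB^{\dagger}\vy) = f_d(g(\vy))$. Next I would write the covariance in the embedding in terms of the covariance of $f_d$:
\begin{equation*}
\text{Cov}[f_B(\vy), f_B(\vy')] = \text{Cov}[f_d(g(\vy)), f_d(g(\vy'))] = k(g(\vy), g(\vy')) = \kappa(g(\vy) - g(\vy'))\,,
\end{equation*}
where the last equality uses the stationarity of $k$.

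The third step is to invoke linearity of $g$: since $g(\vy) - g(\vy') = \mT \mB^{\dagger}\vy - \mT \mB^{\dagger}\vy' = \mT \mB^{\dagger}(\vy - \vy') = g(\vy - \vy')$, we can rewrite the covariance as $\kappa(g(\vy - \vy')) = (\kappa \circ g)(\vy - \vy') = \tilde{\kappa}(\vy - \vy')$. Since this expression depends only on $\vy - \vy'$, the implied kernel on the embedding is stationary by definition.

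There is no real obstacle: the argument is a direct chain of substitutions and relies on no properties of $\kappa$ beyond stationarity and no properties of $g$ beyond linearity. The only point worth flagging for the reader is precisely that last item, namely that the result would fail for a general nonlinear embedding map (such as the $\normltwo$ clipping used in REMBO), which is consistent with the remark following the proposition in the main text that clipping to box bounds effectively makes the embedding nonlinear and so breaks stationarity.
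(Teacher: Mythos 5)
Your proof is correct and follows essentially the same chain of substitutions as the paper's own argument: expand $f_B(\vy) = f_d(\mT\mB^{\dagger}\vy)$, apply stationarity of $k$, and use linearity of $\mT\mB^{\dagger}$ to collapse the difference. You merely make explicit the linearity step that the paper leaves implicit, which is a welcome clarification but not a different route.
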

\begin{proof}
The argument follows that of Prop. \ref{prop:kernel}. As shown there, $f_B(\vy) = f_d(\mT \mB^{\dagger}\vy)$. Then,
\begin{align*}
    \textrm{Cov}[f_B(\vy), f_B(\vy')] &= \textrm{Cov}[f_d(\mT \mB^{\dagger}\vy), f_d(\mT \mB^{\dagger}\vy')]\\
    &=\kappa(\mT \mB^{\dagger}\vy - \mT \mB^{\dagger}\vy')\\
    &=\tilde{\kappa}(\vy - \vy').
\end{align*}
\end{proof}
Consider now clipping to box bounds in the ambient space with the $\normltwo$ projection $p_{\mathcal{B}}$. Then, $f_B(\vy) = f_d(\mT p_{\mathcal{B}}(\mB^{\dagger}\vy))$, and the implied kernel in the embedding is
\begin{equation*}
    \textrm{Cov}[f_B(\vy), f_B(\vy')] = \kappa(\mT (p_{\mathcal{B}}(\mB^{\dagger}\vy) - p_{\mathcal{B}}(\mB^{\dagger}\vy'))).
\end{equation*}
This is clearly non-stationary, because $p_{\mathcal{B}}$ is not translation invariant.

\section{Polytope Bounds on the Embedding}\label{appendix:embedding}

Rather than using projections to the box bounds $\mathcal{B}$, we specify polytope constraints in (\ref{eq:acq}). Fig. \ref{fig:new_embedding} illustrates the embedding with these constraints for the same Branin $D=100$ problem from the top row of Fig. \ref{fig:rembo_illustration}. The embedding in the left figure was created with the REMBO strategy of sampling each entry from $\mathcal{N}(0, 1)$. For the embedding in the right figure, that same projection matrix had each column normalized. This converts the projection matrix to be a sample from the unit circle, as described in Sec. \ref{sec:rembo}.

\begin{figure}
    \centering
    \includegraphics{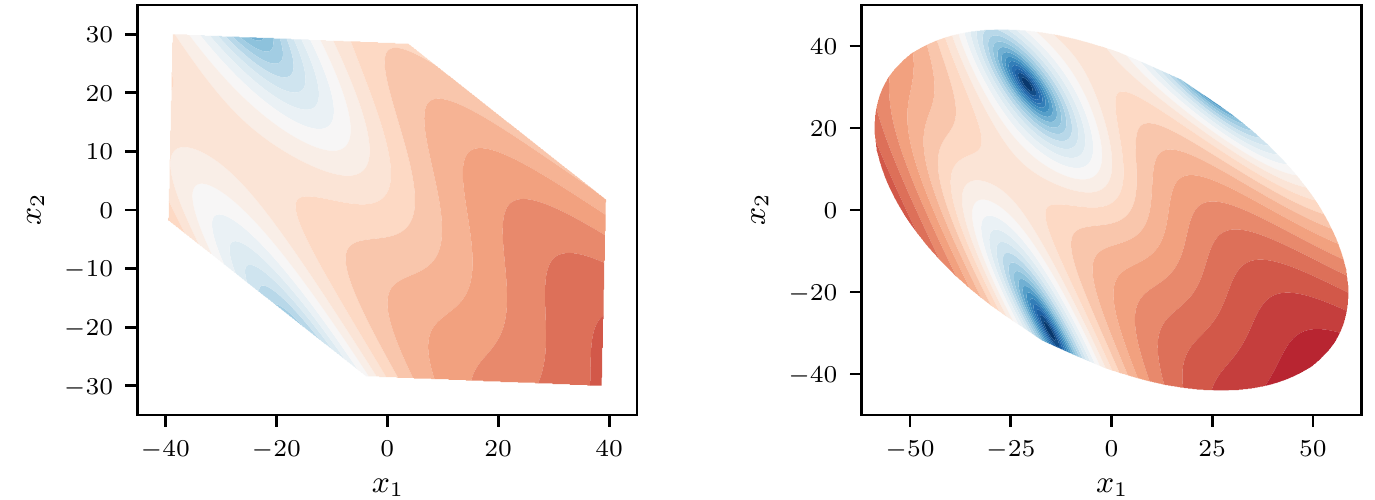}
    \caption{\textit{(Left)} An embedding from a $\mathcal{N}(0, 1)$ projection matrix on the same Branin $D=100$ problem from Fig. \ref{fig:rembo_illustration} subject to constraints of (\ref{eq:acq}). \textit{(Right)} The embedding from the same projection matrix after normalizing the columns to produce unit circle samples. Sampling from the unit circle increases the probability that an optimum will fall within the embedding, and polytope bounds avoid nonlinear distortions.}
    \label{fig:new_embedding}
\end{figure}

The $\mathcal{N}(0, 1)$ embedding does not contain any optima within the polytope bounds. Converting that projection matrix to a hypersphere sample rounds out the vertices of the polytope and expands the space to capture two of the optima. Consistent with Fig. \ref{fig:lp_solns}, we see that hypersphere sampling significantly improves the chances of the embedding containing an optimum. \fig{fig:new_embedding} also shows that with the polytope bounds, we avoid the nonlinear distortions seen with REMBO in \fig{fig:rembo_illustration}.

Note that adding linear constraints to a non-convex optimization problem (acquisition function optimization) does not change the complexity of that problem.

\section{Evaluating the Probability the Embedding Contains an Optimum}\label{appendix:lp}
As in other parts of the paper, we consider a uniform prior on the location of the optimum within a random axis-aligned subspace. A random true projection matrix $\mT$ is sampled by selecting $d$ columns at random and setting each to one of the $d$-dimensional unit vectors. $\vz^*$ is then sampled uniformly at random from $[-1, 1]^d$. $\mB$ is sampled according to the desired strategy, which in our experiments was REMBO ($\mathcal{N}(0, 1)$ entries), HeSBO, or hypersphere. Given these three quantities, we can evaluate whether or not the embedding contains an optimum that satisfies the constraints of (\ref{eq:acq}) by solving the following linear program:
\begin{align*}
    \textrm{maximize  } &\vzero^{\top} \vx \\
    \textrm{subject to  } & \mT \vx = \vz^*,\\
    & (\mB^{\dagger} \mB - \mI) \vx  = \vzero,\\
    & \vx \geq -\vone,\\
    & \vx \leq \vone.
\end{align*}

If this problem is feasible, then the embedding produced by $\mB$ contains an optimum; if it is infeasible, then it does not. Solving this over many draws of $\mT$, $\vz^*$, and $\mB$ produces an estimate of $P_{\textrm{opt}}$ under that prior for the location of optima. Here we used a uniform prior, but this linear program can be taken to compute $P_{\textrm{opt}}$ under any prior.

\begin{figure}
    \centering
    \includegraphics{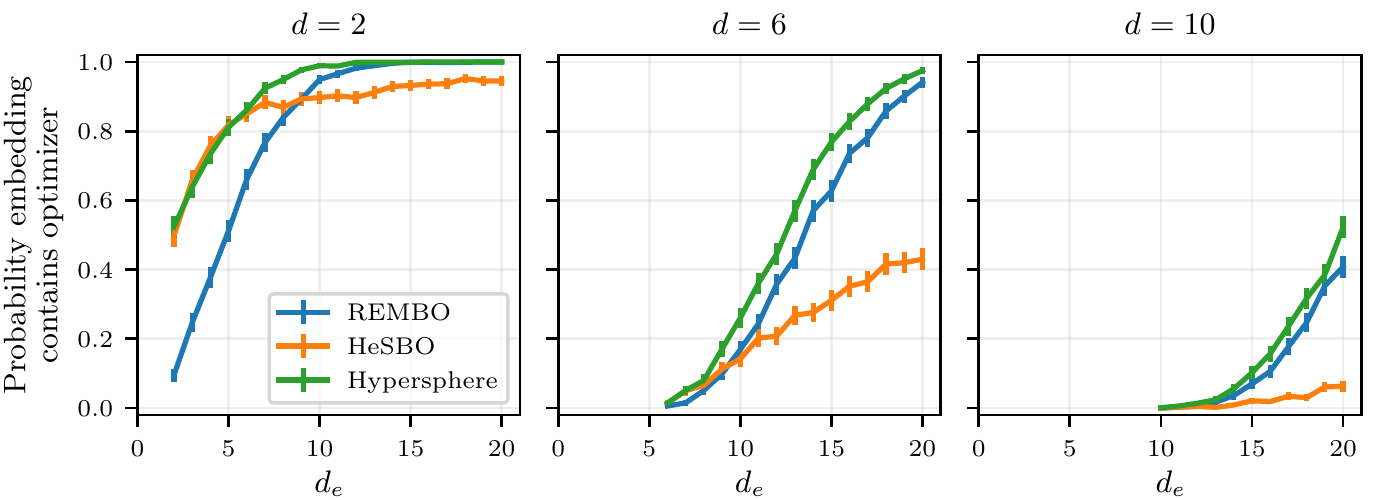}
    \caption{$P_{\textrm{opt}}$ as estimated in \fig{fig:lp_solns}, extended with the results for $d=10$. Setting $d_e>d$ significantly improves the probability of the embedding containing an optimum.}
    \label{fig:lp_solns_ext}
\end{figure}

\fig{fig:lp_solns_ext} shows $P_\textrm{opt}$ for the three embedding strategies as a function of $d$ and $d_e$, for $D$ fixed at 100. The results shown for $d=2$ and $d=6$ are those given in the main text in \fig{fig:lp_solns}. Fig. \ref{fig:lp_solns_D} shows $P_{\textrm{opt}}$ for a wide range of values of $d$ and $D$, for hypersphere sampling. Across this wide range we see that for many values of $d$ we can achieve high values of $P_{\textrm{opt}}$ with reasonable values of $d_e$, even for relatively high values of $D$.

\begin{figure}
    \centering
    \includegraphics{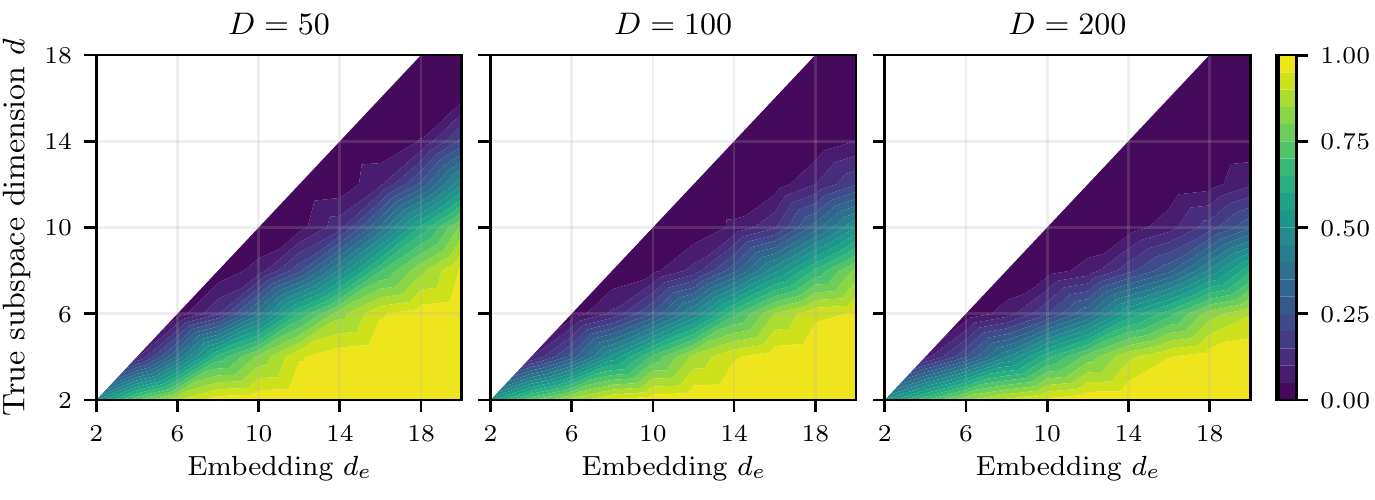}
    \caption{$P_{\textrm{opt}}$ for hypersphere sampling, as estimated in Fig. \ref{fig:lp_solns} but here for a wider range of values of $d$ and $D$. Contour color indicates $P_{\textrm{opt}}$. Doubling $D$ decreases $P_{\textrm{opt}}$ for $d$ and $d_e$ fixed, however even at $D=200$, high values of $P_{\textrm{opt}}$ with reasonable values of $d_e$ can be had for many values of $d$.}
    \label{fig:lp_solns_D}
\end{figure}

\section{Selecting the Embedding Dimension}\label{appendix:d_e}
Linear embedding HDBO requires selecting a dimensionality for the embedding. The results in Sections \ref{sec:method} and \ref{appendix:lp} show clearly that choosing an embedding dimensionality higher than that of the true subspace is vital for obtaining a high probability of the the embedding containing an optimum. In principal, if one knew the true subspace dimension $d$, the results of Sec. \ref{appendix:lp} could be used to calculate $P_{\textrm{opt}}$ as a function of $d_e$, and then $d_e$ could be chosen to reach a desired value of $P_{\textrm{opt}}$. In practice, however we will not typically know what the true subspace dimension is, or even be certain of the existence of a true, linear subspace.

In real BO problems, which have expensive function evaluations, there is always a sample budget that depends on the function evaluation cost and available resources. A simulation that takes several minutes may allow a few hundred iterations, as in the Daisy experiment of \sect{sec:realworld}. When function evaluations are A/B tests that take around a week, the evaluation budget may be limited to less than 50 iterations \citep[\textit{e.g.},][]{letham2019jmlr}. Generally in BO, there is a trade-off between the number of parameters one can optimize and the number of iterations that will be required, and in real problems one must select the number of parameters according to the evaluation budget. In that sense, there is no difference with linear embedding HDBO: $d_e$ should be set to the highest value that is supported by the available evaluation budget. With a budget of 50 iterations and $d_e=15$, it will be unlikely to get good model fit quickly enough to effectively optimize, so smaller values like $8$ or $10$ would be warranted. On the other hand, with the 500 iteration budget of the Daisy problem, one could set $d_e$ in the 15--20 range (the maximum supported by normal BO) to maximize $P_{\textrm{opt}}$.

Simulations and model cross-validation can be helpful for identifying the maximum number of parameters that can be effectively tuned for a particular evaluation budget, but there has been little work in this area. The nature of the dimensionality vs. iteration budget trade-off is important in all real BO problems, not just with linear embedding HDBO, so appropriate heuristics for this question is an important area of future work.

\section{Handling Black-Box Constraints in High-Dimensional Bayesian Optimization}\label{appendix:constrained}
In many applications of BO, in addition to the black-box objective $f$ there are black-box constraints $c_j$ and we seek to solve the optimization problem
\begin{align*}
    \textrm{minimize } &f(\vx)\\
    \textrm{subject to } & c_j(\vx) \leq 0, \quad j=1, \ldots, J,\\
    &\vx \in \mathcal{B}.
\end{align*}
In most settings the constraint functions $c_j$ are evaluated simultaneously with the objective $f$. Constraints are typically handled in BO by fitting a separate GP to each outcome (that is, to $f$ and to each $c_j$). The acquisition function is then modified to consider not only the objective value but also whether the constraints are likely to be satisfied \citep[\textit{e.g.},][]{gardner14}.

The extension of BO in an embedding to constrained BO is straightforward, so long as the same embedding is used for every outcome. A separate GP (in the case of ALEBO, using the Mahalanobis kernel) is fit to data from each outcome. Because the embedding is shared, predictions can be made for all of the outcomes at any point in the embedding. This allows us to evaluate and optimize an acquisition function for constrained BO in the embedding. Once a point is selected, it is projected up to the ambient space and evaluated on $f$ and each $c_j$ as usual. Random projections are especially well-suited for constrained BO because there is no harm in requiring the same projection for all outcomes, since it is a random projection anyway.

These same considerations apply to multi-objective optimization. Acquisition functions for multi-objective optimization can be directly applied to HDBO using linear embeddings in the same way that those for constrained optimization are used here.

\section{Additional Benchmark Experiment Results}\label{appendix:experiments}
Here we provide results from two additional benchmark problem (Hartmann6 $D$=100, and Hartmann6 random subspace $D$=1000), three additional methods (LineBO variants), and provide a study of the sensitivity of ALEBO performance to $d_e$ and $D$. We also provide implementation details for the experiments, and an extended discussion of the results from each experiment.

\subsection{Method Implementations and Experiment Setup}
The linear embedding methods (REMBO, HeSBO, and ALEBO) were all implemented using BoTorch, a framework for BO in PyTorch \citep{balandat19}, and so used the same acquisition functions and the same tooling for optimizing the acquisition function. Importantly, this means that all of the difference seen between the methods in the empirical results comes exclusively from the different models and embeddings. EI was the acquisition function for the Hartmann6 and Branin benchmarks, and NEI \citep{letham2019noisyei} was used to handle the constraints in the Gramacy problem. ALEBO and HeSBO were given a random initialization of 10 points, and REMBO was given a random initialization of 2 points for each of its 4 projections used within a run.

The remaining methods used reference implementations from their authors with default settings for the package: REMBO-$\phi k_\Psi$ and REMBO-$\gamma k_\Psi$\footnote{\url{github.com/mbinois/RRembo}}; EBO\footnote{\url{github.com/zi-w/Ensemble-Bayesian-Optimization}}; Add-GP-UCB \footnote{\url{github.com/dragonfly/dragonfly}, with option \texttt{acq="add\_ucb"}}; SMAC\footnote{\url{github.com/automl/SMAC3}, \texttt{SMAC4HPO} mode}; CMA-ES\footnote{\url{github.com/CMA-ES/pycma}}; and CoordinateLineBO, RandomLineBO, and DescentLineBO\footnote{\url{github.com/jkirschner42/LineBO}}. EBO requires an estimate of the best function value, and for each problem was given the true best function value. SMAC and CMA-ES require an initial point, and were given the point at the center of the ambient space box bounds. See the benchmark reproduction code at \repo{} for the exact calls used for each method.

The function evaluations for all problems were noiseless, so the stochasticity throughout the run and in the final value all comes from stochasticity in the methods themselves. For linear embedding methods the main sources of stochasticity are in generating the random projection matrix and in the random initialization.

\begin{figure}
    \centering
    \includegraphics{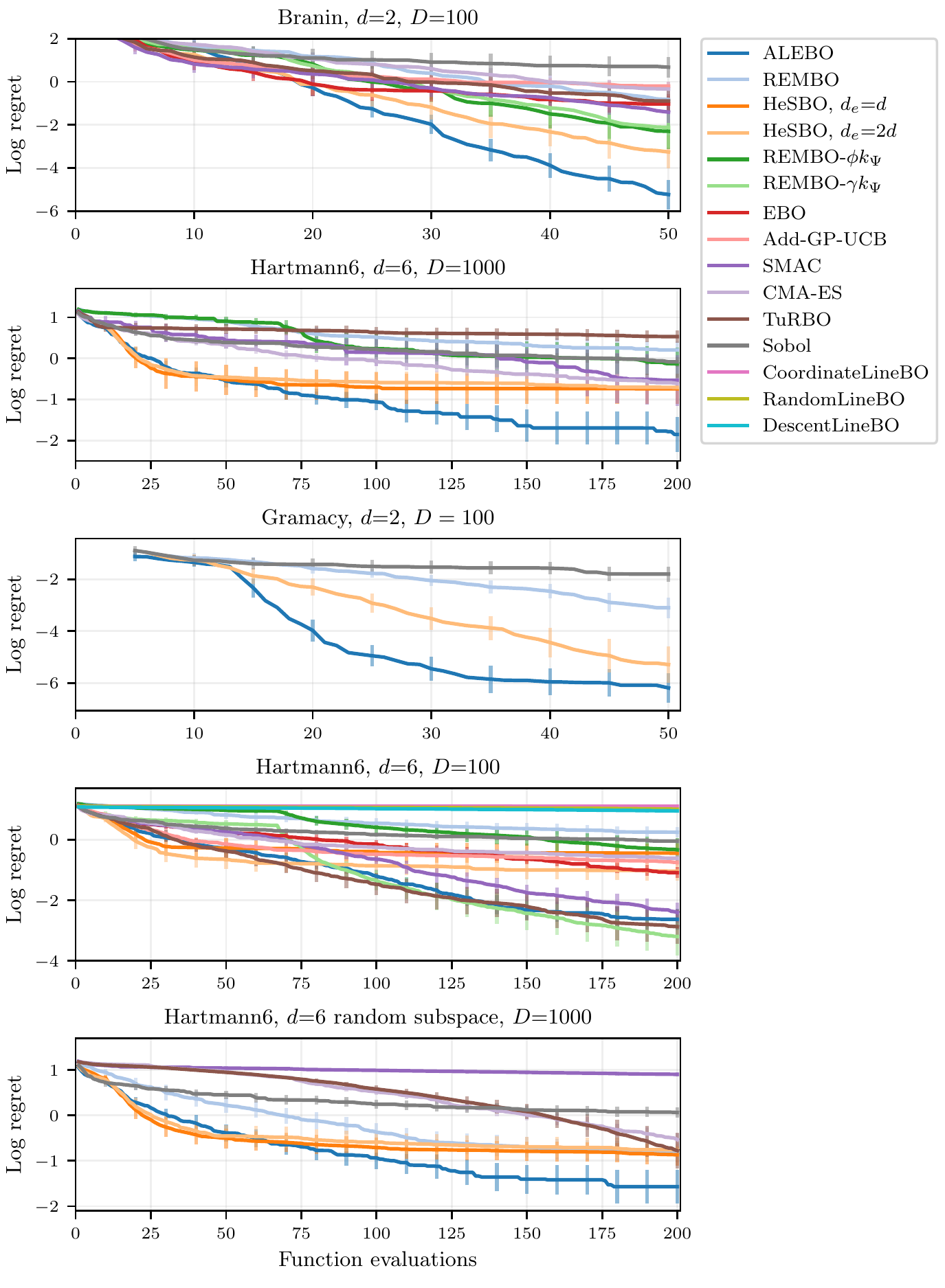}
    \caption{Log regret for the benchmark experiments of \fig{fig:synthetic_results}, plus Hartmann6 with $D$=100 and with a random (non-axis-aligned) subspace in $D$=1000. Each trace is the mean over 50 repeated runs, with errors bars showing two standard errors of the mean. ALEBO was a best-performing method on all problems; on Hartmann6 $D$=100 it tied with REMBO-$\gamma k_\Psi$, TuRBO, and SMAC as the best methods.}
    \label{fig:log_regrets}
\end{figure}

\subsection{Analysis of experimental results}

Fig. \ref{fig:log_regrets} provides a different view of the benchmark results of Fig. \ref{fig:synthetic_results}, showing log regret for each method, averaged over runs with error bars indicating two standard errors of the mean. This is evaluated by taking the value of the best point found so far, subtracting from that the optimal value for the problem, and then taking the log of that difference. The results are consistent with those seen in Fig. \ref{fig:synthetic_results}, and the standard errors show that ALEBO's improvement in average performance over the other methods is statistically significant. We now discuss some specific aspects of these experimental results.

\paragraph{Branin $D$=100} Starting from around iteration 20, ALEBO performed the best of all of the methods. The distribution of final iteration values shows that in one iteration the ALEBO embedding did not contain an optimum and so achieved a final value near 10. However, across all 50 runs nearly all achieved a value very close to the optimum, leading to the best average performance. Without the log transform (\fig{fig:synthetic_results}), SMAC and the additive GP methods were the next best performing.

The poor performance of HeSBO on this problem (particularly in Fig. \ref{fig:synthetic_results} without the log, where it is outperformed by all methods other than Sobol) can be attributed entirely to the embedding not containing an optimum. Recall that for this problem there are exactly three possible HeSBO embeddings, which are shown in Fig. \ref{fig:hesbo_embeddings}. As explained in \sect{appendix:hesbo}, the first embedding contains the optimum of 0.398, while the best value in the other embeddings are 0.925 and 17.18. Thus, if the BO were able to find the true optimum within each embedding with the budget of 50 function evaluations given in this experiment, the expected best value found by HeSBO would be:
\begin{equation*}
    0.398 P_{\textrm{opt}} + 0.925 \left(\frac{1-P_{\textrm{opt}}}{2}\right)  + 17.18 \left( \frac{1-P_{\textrm{opt}}}{2} \right).
\end{equation*}
This is the best average performance one can hope to achieve using the HeSBO embedding on this problem. Using (\ref{eq:hesbo_prob}) we can compute $P_{\textrm{opt}}$ for $d_e=4$ as 0.75, and it follows that the HeSBO expected best value is 2.56. This is nearly exactly the average best-value shown in Fig. \ref{fig:synthetic_results}. The poor performance of HeSBO is thus not related to BO, but comes entirely from the 12.5\% chance of generating an embedding whose optimal value is 17.18. The presence of these embeddings can be clearly seen in the distribution of final best values in Fig. \ref{fig:synthetic_results}.

\paragraph{Hartmann6 $D$=1000}
As noted in the main text, the additive kernel methods and REMBO-$\gamma k_\Psi$ could not scale up to the 1000 dimensional problem. SMAC also became very slow and was only run for 10 repeats (rather than 50) on the $D$=1000 problems. A nice property of linear embedding approaches is that the running time is not significantly impacted by the ambient dimensionality. Table \ref{tab:times} gives the average running time per iteration for the various benchmark methods (all run on the same 1.90GHz processor and allocated a single thread). Inferring the additional parameters in the Mahalanobis kernel and the added linear constraints make ALEBO slower than other linear embedding methods, but it is faster than the additive kernel methods (an order of magnitude faster than Add-GP-UCB), and at $D$=1000 is an order of magnitude faster than SMAC. The average of about 50s per iteration is short relative to the function evaluation time of typical resource-intensive BO applications.

\begin{table}
\centering
\caption{Average running time per iteration in seconds on the Hartmann6 problem, $D$=100 and $D$=1000.}
\begin{tabular}{l|c|c}
     & $D$=100 & $D$=1000  \\
     \hline
     ALEBO & 42.7 & 52.5\\
     REMBO & 1.6 & 1.9\\
     HeSBO, $d_e$=$d$ & 1.1 & 2.0\\
     HeSBO, $d_e$=$2d$ & 1.1 & 2.3\\
     REMBO-$\phi k_{\Psi}$ & 2.1 & 1.1\\
     REMBO-$\gamma k_{\Psi}$ & 7.2 & ---\\
     EBO & 69.6 & ---\\
     Add-GP-UCB & 995.0 & ---\\
     SMAC & 26.2 & 1137.9\\
     CMA-ES & 0.0 & 0.1\\
     Sobol & 0.1 & 0.8
\end{tabular}
\label{tab:times}
\end{table}

On both this problem and the $D$=100 version, REMBO performed worse than Sobol, despite there being a true linear subspace that satisfies the REMBO assumptions. The source of the poor performance is the poor representation of the function on the embedding illustrated in \fig{fig:rembo_illustration}. Correcting these issues as is done in ALEBO significantly improves the performance.

\paragraph{Hartmann6 $D$=100}
ALEBO, REMBO-$\gamma k_{\Psi}$, TuRBO, and SMAC were the best-performing methods on this problem. HeSBO and Add-GP-UCB both did very well early on, but then got stuck and did not progress significantly after about iteration 50. For HeSBO, this is likely because the performance is ultimately limited by the low probability of the embedding containing an optimum.

This problem was used to test three additional methods beyond those in \fig{fig:synthetic_results}: CoordinateLineBO, RandomLineBO, and DescentLineBO \citep{kirschner19}. These are recent methods developed for high-dimensional safe BO, in which one must optimize subject to safety constraints that certain bounds on the functions must not be violated. The performance of these methods can be seen in the fourth panel of \fig{fig:log_regrets}: all three LineBO variants perform much worse than Sobol, and show almost no reduction of log regret. This finding is consistent with the results of \citet{kirschner19}, who used the Hartmann6 $D$=20 problem as a benchmark problem. At $D$=20, they found that CoordinateLineBO required about 400 iterations to outperform random search, and even after 1200 iterations RandomLineBO and DescentLineBO did not perform better than random search. These methods are designed specifically for safe BO, which is a significantly harder problem than usual BO that has much worse scaling with dimensionality. The primary challenge for high-dimensional safe BO lies in optimizing the acquisition function, which is difficult even for relatively small numbers of parameters where there is no difficulty in optimizing the traditional BO acquisition function. The LineBO methods develop new techniques for acquisition function optimization, but do not consider difficulties with GP modeling in high dimensions, which is the main focus of HDBO work. LineBO methods perform very well on safe BO problems relative to other methods, but ultimately non-safe HDBO is not the problem that they were developed for, and so it is not surprising to see that they were not successful on this task.

\paragraph{Hartmann6 random subspace $D$=1000}
Linear embedding BO methods assume the existence of a true linear subspace, but do not assume anything about the orientation of that subspace and are generally invariant to rotation. Prior work on HDBO has typically focused on the axis-aligned (unused variables) problems that we used here, but we also include a non-axis-aligned problem. We generated a random true embedding by sampling a rotation matrix from the Haar distribution on the special orthogonal group SO$(D)$ \citep{stewart80}, and then taking the first $d$ rows to specify a projection matrix $\bm{T}$ from $D=1000$ down to $d=6$. This defines a non-axis-aligned true subspace, and we took the true low-dimensional function $f_d$ as the Hartmann6 function on this subspace. Bayesian optimization proceeded as with the other problems, and results for the linear subspace methods were similar to the axis-aligned $D$=1000 problem, except REMBO performed equal to HeSBO.

\subsection{Sensitivity of ALEBO to Embedding and Ambient Dimensions}\label{sec:subsec_ablation}

We study sensitivity of ALEBO optimization performance to the embedding dimension $d_e$ and the ambient dimension $D$ using the Branin function. To test dependence on $d_e$, for $D=100$ we ran 50 optimization runs for each of $d_e \in \{2, 3, 4, 5, 6, 7, 8\}$. To test dependence on $D$, for $d_e=4$ we ran 50 optimization runs for each of $D \in \{50, 100, 200, 500, 1000\}$. Note that the $d_e=4$ and $D=100$ case in each of these is exactly the optimization problem of \fig{fig:synthetic_results}.

\begin{figure}
\includegraphics{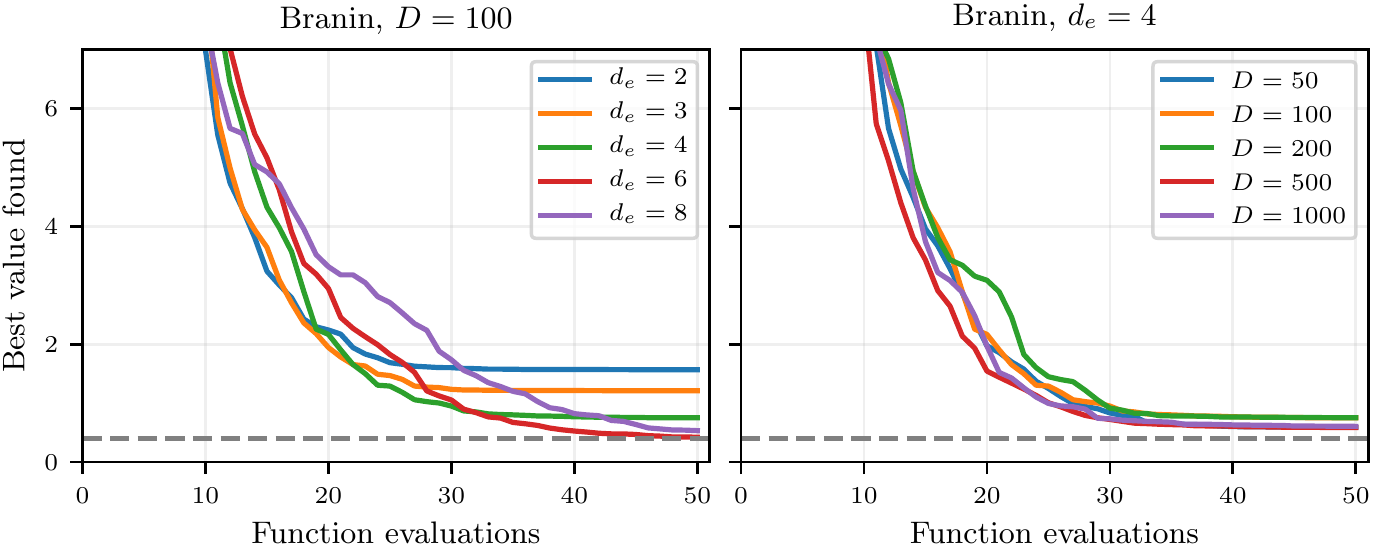}
\caption{ALEBO performance on the Branin problem, (\textit{Left}) as a function of embedding dimension $d_e$  and (\textit{Right}) as a function of ambient dimension $D$. Performance shown is the average of 50 repeated runs. Optimization performance is poor with $d_e=2$, but shows little sensitivity to $d_e$ for values greater than 2. Optimization performance shows little sensitivity with $D$, all the way up to $D=1000$.}
\label{fig:by_D_d_traces}
\end{figure}

The results of the optimizations are shown in Figs.~\ref{fig:by_D_d_traces} and \ref{fig:by_D_d}. For $d_e=d$, optimization performance was poor. From Fig. \ref{fig:lp_solns} we know this is because there is a low probability of the embedding containing an optimizer. Increasing $d_e$ increases that probability, but also increases the dimensionality of the embedding and thus reduces the sample efficiency of the BO in the embedding. This trade-off can be seen clearly in \fig{fig:by_D_d_traces}: with $d_e=2$ there is rapid improvement that then flattens out because of the lack of good solutions in the embedding, whereas for $d_e=8$ the initial iterations are worse but then it ultimately is able to find much better solutions. Even at $d_e=8$ the average best final value was better than that of any of the comparison methods in \fig{fig:synthetic_results}.

The ambient dimension $D$ will not directly impact the GP modeling in ALEBO, which depends only on $d_e$, however it will impact the probability the embedding contains an optimum as shown in \fig{fig:lp_solns_D}. Consistent with the strong ALEBO performance for the Hartmann6 $D$=1000 problem, we see here that even increasing $D$ to 1000 does not significantly alter optimization performance. Even at $D=1000$, ALEBO had better performance than the other benchmark methods had on $D=100$.

\begin{figure}
\includegraphics{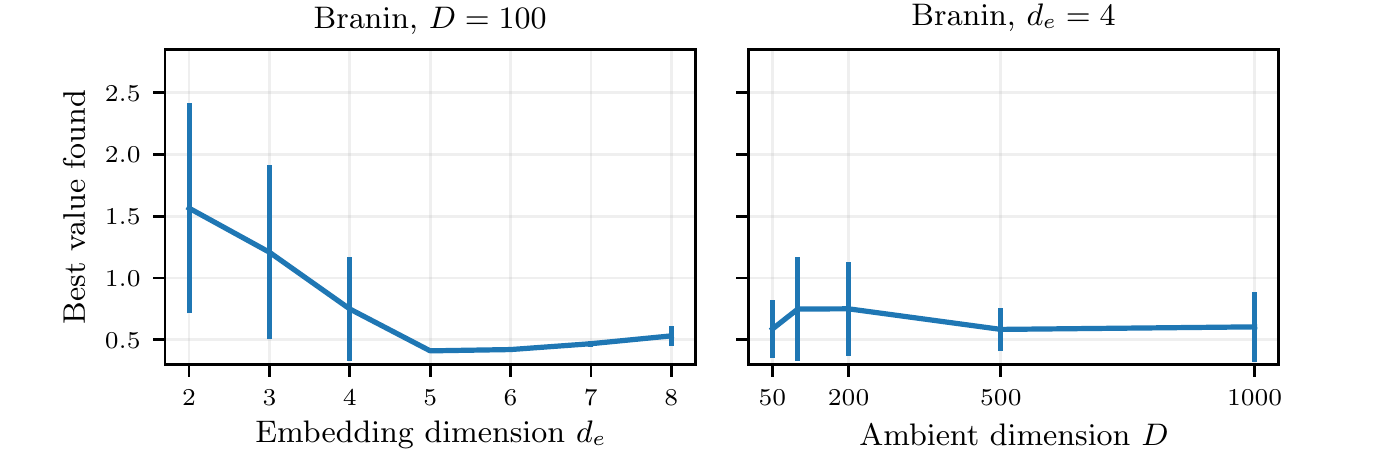}
\caption{Final best value for the Branin problem optimizations of \fig{fig:by_D_d_traces}, as mean with error bars showing two standard errors. With the exception of $d_e$ of 2 or 3, optimization performance was good across a wide range of values of $d_e$ and $D$.}
\label{fig:by_D_d}
\end{figure}

\subsection{Ablation Study}
We use an ablation study to better understand the impact of two of the new developments incorporated into ALEBO: the Mahalanobis kernel for improved modeling in the embedding, and the hypersphere sampling for increasing the probability that the embedding contains an optimum. We used the Branin $D=100$ problem for this study, with $d_e=4$ as in Fig. \ref{fig:synthetic_results}. For the ablation of the kernel, we replaced the Mahalanobis kernel with an ARD Matern 5/2 kernel. For the ablation of the sampling, we replaced hypersphere sampling with the random normal samples used by REMBO. Results are given in Fig. \ref{fig:ablation}. Removing either component significantly decreased BO performance, and removing the Mahalanobis kernel was especially detrimental. 

\begin{figure}
\centering
\includegraphics{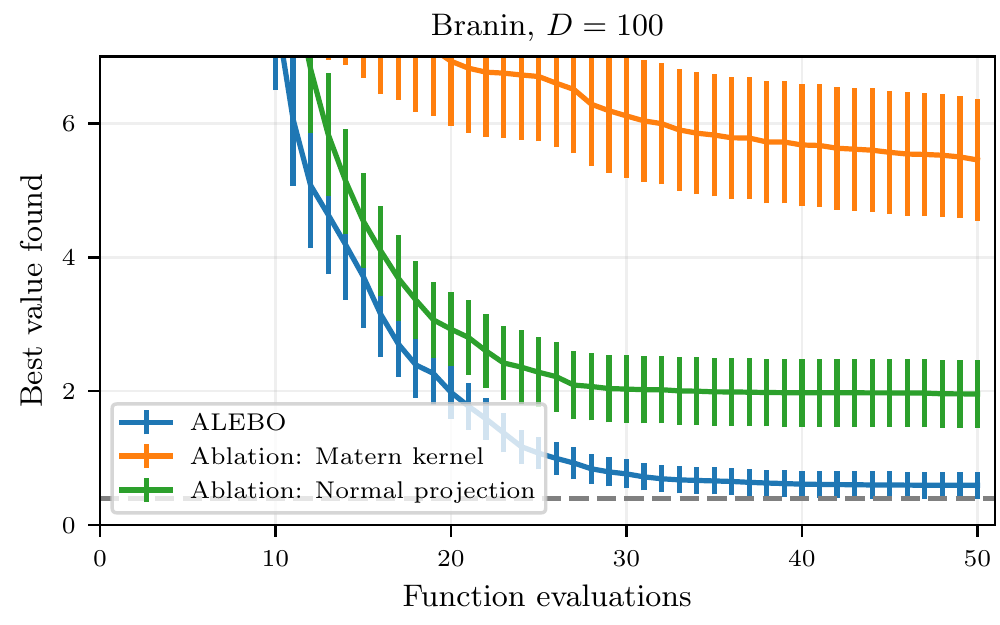}
\caption{Ablation study results comparing full ALEBO (same trace from Fig. \ref{fig:synthetic_results}) with: (1) Ablation of the Mahalanobis kernel, replacing it with an ARD Matern 5/2 kernel, and (2) Ablation of the hypersphere-sampled projection matrix, replacing it with an i.i.d. normal random matrix. Both components are necessary for the good HDBO performance of ALEBO, with the Mahalanobis kernel the most important factor.}
\label{fig:ablation}
\end{figure}

\section{Constrained Neural Architecture Search Problem}
\label{appendix:nas}
NAS-Bench-101\citep{nasbench} is a dataset of convolutional neural network (CNN) performance on the CIFAR-10 problem, produced for the purpose of reproducible research in neural architecture search. The search space is to design the cell for a CNN using a DAG with 7 nodes and up to 9 edges. The first node is input and the last node is output; the remaining five can be selected to be any one of the operations $3 \times 3$ convolution, $1 \times 1$ convolution, or $3 \times 3$ max-pool. The edges connect these operations to each other, and to the input and output. This space includes more than 400,000 unique models, each of which was evaluated on CIFAR-10 by training for 108 epochs on a TPU, and then testing on the test set. For each model, several metrics were computed, including the number of seconds required for training and the final test-set accuracy.

We parameterized this as a continuous HDBO problem by separately parameterizing the operations and edges. The operations were parameterized using one-hot encoding, which, with five selectable nodes and 3 options for each, produced 15 parameters. These were optimized in the continuous $[0, 1]$ space, and then the max for each set was taken as the ``hot" feature that specified which operation to use in the corresponding node. NAS-Bench-101 represents the edges using the upper-triangular $7 \times 7$ adjacency matrix, which has $\frac{(7 \cdot 6)} {2} = 21$ binary entries. These entries were similarly optimized in the continuous $[0, 1]$ space, and the adjacency matrix was created iteratively by adding entries in the rank order of their corresponding continuous-valued parameters, and finding the largest number of non-zero entries that can be added to still have no more than 9 edges after pruning portions not connected to input or output. The combination of the adjacency matrix parameters (21) and the one-hot-encoded operation parameters (15) produced a 36-dimensional optimization space.

The objective for the optimization was to maximize test-set accuracy, subject to a constraint on training time being less than 30 minutes. A large portion of the models in the NAS-Bench-101 modeling space have training times above 30 minutes, with the longest around 90 minutes. While test-set accuracy is proportional to training time \citep{nasbench}, our results in Fig. \ref{fig:nasbench} show that with HDBO it is possible to find well-performing models with short training times. TuRBO and CMA-ES were adapted to this constrained problem by return a poor objective value for infeasible points.

In the results of \fig{fig:nasbench}, HeSBO showed strong performance in early iterations, but quickly flattened out and ended up worse than random on average. By the final iteration, ALEBO performed significantly better than Sobol, CMA-ES, HeSBO, and REMBO. TuRBO performed worse on early iterations, but in later iterations had performance that was not significantly different from ALEBO.

\section{Locomotion Optimization Problem}
\begin{figure}
\includegraphics{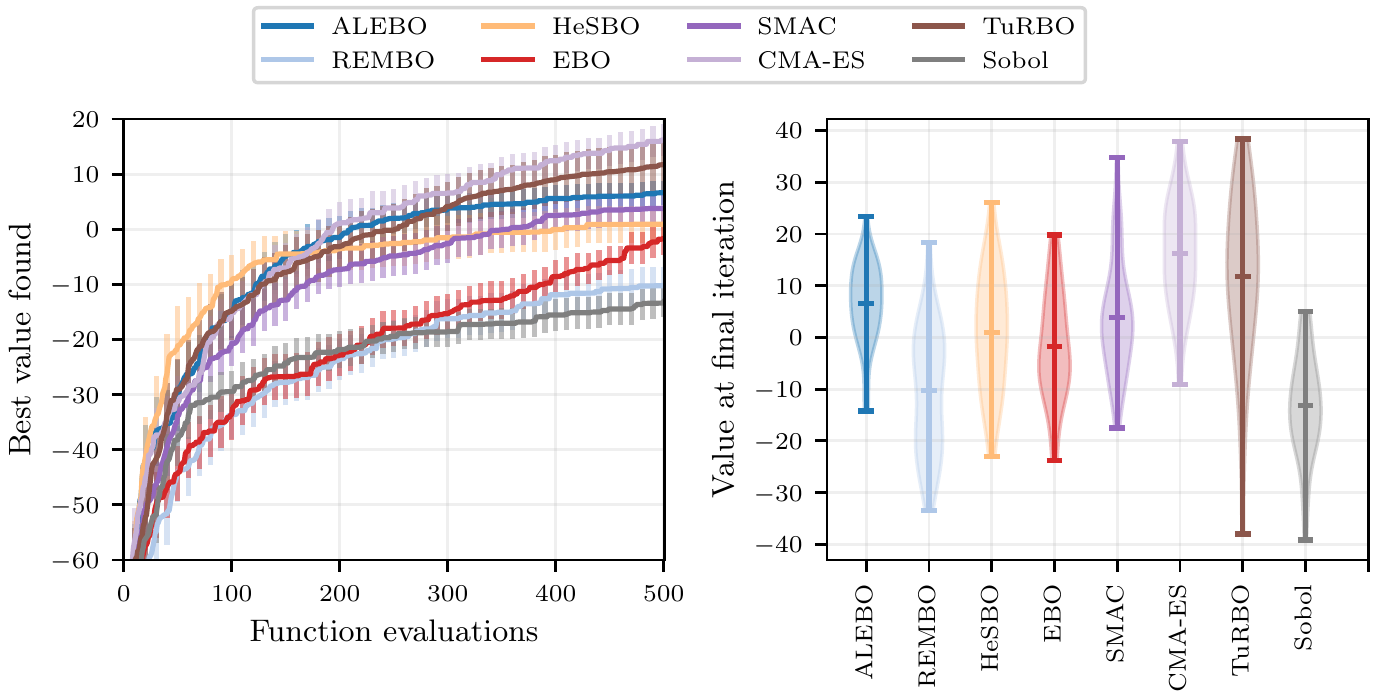}
\caption{Optimization performance on the hexapod locomotion task from \fig{fig:expdaisy}. \textit{(Left)} Each trace shows the best value by each iteration, averaged across repeated runs with error bars showing two standard errors. \textit{(Right)} The distribution of values at the final iteration. ALEBO performed best of the HDBO methods, but CMA-ES outperformed them all.}
\label{fig:daisy_2}
\end{figure}

\label{sec:appendix:daisy}
The task for the final set of experiments was to learn a gait policy for a simulated robot. As a controller, we use the Central Pattern Generator (CPG) from \cite{crespi2008online}. The goal in this task is for the robot to walk to a target location in a given amount of time, while reducing joint velocities, and average deviation from a desired height
\begin{equation}
    f(\vp) = C - ||\vx_{\textrm{final}} - \vx_{\textrm{goal}}|| - \sum^T_{t=0} ( w_1||\dot{\vq}_t|| - w_2|h_{\textrm{robot},t} - h_{\textrm{target}}|)\,,
\end{equation}
where $C = 10$, $w_1 = 0.005$, and $w_2 = 0.01$ are constants. 
$\vx_{\textrm{final}}$ is the location of the robot on a plane at the end of the episode, $\vx_{\textrm{goal}}$ is the target location, $\dot{\vq}_t$ are the joint velocities at time $t$ during the trajectory, $h_{\textrm{robot},t}$ is the height of the robot at time $t$, and $h_{\textrm{target}}$ is a target height. $T = 3000$ is the total length of the trajectory, leading to $30s$ of experiment. The objective function is evaluated at the end of the trajectory.

\fig{fig:daisy_2} shows the optimization performance over 50 repeated runs, which are the same results of \fig{fig:expdaisy} but including errors bars and the distribution of final best values. All of the methods have high variance in their final best value across runs. ALEBO has the lowest variance and thus the most robust performance. SMAC was able to find a good value in one run, but on average performed slightly worse than ALEBO. TuRBO performed better, but CMA-ES was the clear best-performing method on this problem.

\end{document}